\documentclass{article} 
\usepackage{iclr2027_conference,times}

\usepackage{amsmath,amsfonts,bm}

\def\eqref#1{equation~\ref{#1}}

\def\1{\bm{1}}

\DeclareMathAlphabet{\mathsfit}{\encodingdefault}{\sfdefault}{m}{sl}
\SetMathAlphabet{\mathsfit}{bold}{\encodingdefault}{\sfdefault}{bx}{n}

\usepackage{url}
\usepackage[utf8]{inputenc} 
\usepackage[T1]{fontenc}    
\usepackage[colorlinks=true, citecolor=blue]{hyperref}
\usepackage{booktabs}       
\usepackage{amsfonts}       
\usepackage{nicefrac}       
\usepackage{microtype}      
\usepackage{xcolor}         

\usepackage{rotating}
\usepackage{graphicx}
\usepackage{subcaption}
\usepackage{amsmath}
\usepackage{algorithm}
\usepackage{algorithmic}
\usepackage{multirow}
\usepackage{adjustbox}
\usepackage{amsthm}
\usepackage{wrapfig}
\usepackage{enumitem}
\usepackage{float}
\usepackage{amssymb}
\usepackage{colortbl}
\usepackage{makecell}
\usepackage{pifont}
\usepackage{array}
\usepackage{caption}
\usepackage{tabularx}
\usepackage{makecell}

\definecolor{Gray}{gray}{0.93}
\definecolor{linkpink}{HTML}{FF2D8B}

\newtheorem{lemma}{Lemma}
\newtheorem{theorem}{Theorem}
\newtheorem{assumption}{Assumption}

\title{Vocabulary-Incomplete Unsupervised Vision-Language Model Adaptation}

\author{
	Si Qin$^{1}$, Jiawei Tang$^{1}$, Yaxin Hou$^{1}$, Yan Chen$^{2}$, Yutong Xie$^{2}$, Chen Li$^{3}$, Yuheng Jia$^{1,4}$\thanks{Corresponding author.}\\
	$^1$School of Computer Science and Engineering, Southeast University, Nanjing 210096, China \\
	$^2$College of Software Engineering, Southeast University, Nanjing 210096, China \\
	$^3$Department of Biochemistry and Molecular
	Biology, Monash University, Melbourne, \\VIC 3800, Australia \\
	$^4$Key Laboratory of New Generation Artificial Intelligence Technology and Its \\
	Interdisciplinary Applications (Southeast University), Ministry of Education, China \\
	\texttt{\{siqin,230259148,yaxin,220256447,220256506\}@seu.edu.cn,}\\
	\texttt{chen.li@monash.edu, yhjia@seu.edu.cn} \\
}

\iclrfinalcopy 
\begin{document}

\maketitle

\lhead{Under review as a conference paper at ICLR 2027}

\begin{abstract}
	In certain real-world scenarios, obtaining labeled data is excessively expensive, motivating increasing interest in adapting models without annotations. Vision-language models (VLMs) provide pseudo-labels for unlabeled images through image-text similarity, enabling adaptation without manual labeling. The previous unsupervised vision-language model adaption methods assume access to an adaptation vocabulary containing all class names. However, a complete vocabulary is difficult to construct in real-world scenarios, as unlabeled data may contain classes that are overlooked or fall outside the current task scope. Thus, we propose \textbf{V}ocabulary-\textbf{I}ncomplete \textbf{U}nsupervised \textbf{V}ision-language model \textbf{A}daptation (VI-UVA), where only a subset of class names are available during adaptation. The vocabulary incompleteness induces an adaptation bias toward provided classes, which degrades omitted-class recognition. To address this issue, we propose \textbf{L}abel \textbf{S}pace \textbf{R}econstruction (LSR), which reconstructs a latent label space from the unlabeled data and incorporates it into prompt tuning, thereby extending adaptation beyond the incomplete vocabulary and mitigating the adaptation bias. Furthermore, we derive an omitted-class semantic discrepancy bound for LSR and identify the key factors affecting the bound. Extensive experiments across eight benchmarks demonstrate that LSR outperforms existing state-of-the-art methods.
\end{abstract}

\section{Introduction}
Vision-language models (VLMs) exhibit strong open-vocabulary generalization capabilities but often require fine-tuning to further improve performance on specific downstream tasks~\citep{CLIC,Meta-CLIP2}. Nevertheless, acquiring labeled data for fine-tuning can be excessively expensive in certain real-world scenarios~\citep{CAP,AiR}, driving increasing interest in adapting VLMs without manual annotations. \textbf{U}nsupervised \textbf{V}ision-language model \textbf{A}daptation (\textbf{UVA}) \citep{UPL2022} aims to improve the downstream performance of pre-trained VLMs with unlabeled data. Existing UVA methods \citep{FPL_GRIP,CPL} typically construct textual prototypes from the adaptation vocabulary and assign pseudo-labels to unlabeled data based on image-text similarity. High-confidence pseudo-labeled samples are then selected to fine-tune the VLM for the target task. While effective in UVA, these methods rely on the assumption that the adaptation vocabulary contains all class names present in the unlabeled data.

However, this assumption may not hold in real-world scenarios, where unlabeled data contains classes beyond the scope of the current adaptation task. For example, when researchers adapt a VLM to monitor target species, the adaptation vocabulary may omit some species present in the wildlife images because they fall outside the current task scope or are overlooked during task specification~\citep{zhu2022class,wildclip,Open-Insect}. As monitoring expands beyond target species, researchers need to apply the adapted model to recognize other species. The species names previously omitted can be added to the evaluation vocabulary as textual queries~\citep{wildclip,BioCLIP}. Motivated by this scenario, we propose \textbf{V}ocabulary-\textbf{I}ncomplete \textbf{U}nsupervised \textbf{V}ision-language model \textbf{A}daptation (\textbf{VI-UVA}). In this setting, only a subset of class names (i.e., provided-class names) is available during adaptation, while the complete vocabulary is only available at the evaluation stage. Fig.~\ref{fig:introductiona} shows the difference between UVA and VI-UVA.

\begin{figure*}[ht]
	\centering
	\begin{subfigure}[t]{0.63\textwidth}
		\centering
		\includegraphics[width=\linewidth]{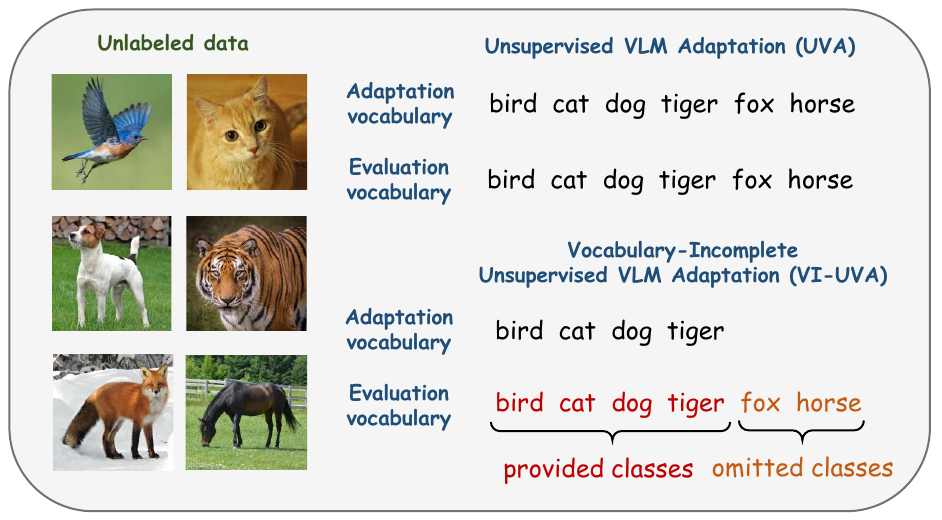}
		\caption{}
		\label{fig:introductiona}
	\end{subfigure}
	\hfill
	\begin{subfigure}[t]{0.36\textwidth}
		\centering
		\includegraphics[width=\linewidth]{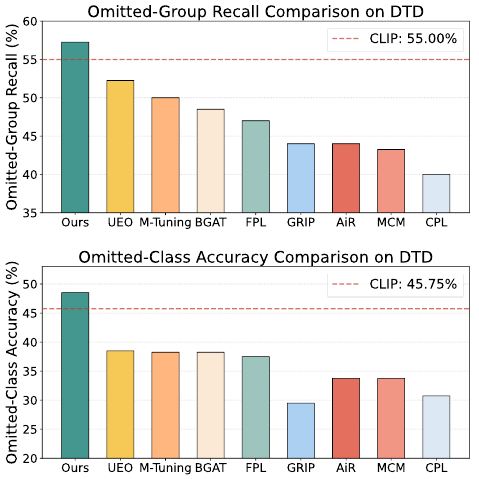}
		\caption{}
		\label{fig:introductionb}
	\end{subfigure}
	\vspace{-0.2cm}
	\caption{(a) Comparison between unsupervised VLM adaptation (UVA) and vocabulary-incomplete unsupervised VLM adaptation (VI-UVA). During adaptation, all class names are available for UVA, while only a subset of class names is available for VI-UVA. During evaluation, UVA and VI-UVA are both tested on the complete vocabulary including provided-class names and omitted-class names. (b) Omitted-group recall measures the proportion of omitted-class samples assigned to any omitted class, which indicates how well the model avoids vocabulary overfitting. On the DTD dataset, our method outperforms the best adaptation method by 5.0 and 10.0 percentage points in omitted-group recall and omitted-class accuracy, respectively. }
	\label{fig:introduction}
\end{figure*}

Existing UVA methods rely on pseudo-labeling to adapt VLMs. However, under the proposed setting, pseudo-labeling may assign omitted-class images to provided classes, while training exclusively with the incomplete vocabulary drives the learned prompts toward the provided classes, inducing an \textbf{adaptation bias}. This bias leads to vocabulary overfitting (i.e., the model tends to classify omitted-class samples as provided classes) and degrades omitted-class accuracy (Fig.~\ref{fig:introductionb}). Thus, the main issue is to improve provided-class recognition and preserve omitted-class recognition, even though the information of omitted classes are unavailable during adaptation.

To address this issue, we propose \textbf{L}abel \textbf{S}pace \textbf{R}econstruction (LSR). The main idea is to exploit the latent semantic structures in unlabeled data, which contains images from both provided and omitted classes, offering information beyond the adaptation vocabulary. To extract this semantic information, we train a lightweight \textbf{C}ross-\textbf{M}odal \textbf{S}emantic \textbf{P}rojector (CMSP) that maps visual representations into the textual embedding space. We apply CMSP to all unlabeled images and cluster the generated textual embeddings to obtain semantic anchors. Each anchor is associated with a corresponding pseudo-token sequence, constructing a pseudo-vocabulary that encodes latent semantics. By incorporating the pseudo-vocabulary into prompt tuning, LSR prevents the prompts from being tuned only to the provided classes. This helps mitigate the adaptation bias and preserve omitted-class recognition (Fig.~\ref{fig:introductionb}). Finally, we derive an omitted-class semantic discrepancy bound for LSR, characterizing how projection quality, cross-modal recomposition, and representation approximation jointly govern the bound.

In summary, the main contributions of this work are as follows.

\begin{itemize}[leftmargin=*]
	\item For the first time, we propose \textbf{V}ocabulary-\textbf{I}ncomplete \textbf{U}nsupervised \textbf{V}ision-language \textbf{M}odel \textbf{A}daptation (VI-UVA), a practical yet challenging setting in which only a subset of class names is available during adaptation. The vocabulary incompleteness induces an adaptation bias, which leads to vocabulary overfitting and degrades omitted-class recognition.
	
	\item We propose \textbf{L}abel \textbf{S}pace \textbf{R}econstruction (LSR), which reconstructs a latent label space from the unlabeled data and incorporates it into prompt tuning, thereby extending fine-tuning beyond the adaptation vocabulary and mitigating the adaptation bias.
	
	\item We establish an omitted-class semantic discrepancy bound for LSR, which is jointly determined by projection quality, cross-modal recomposition, and representation approximation.
	
	\item Extensive experiments on eight commonly used benchmarks at different omission ratios demonstrate that our method achieves state-of-the-art performance.
\end{itemize}

\section{Related Work}
\subsection{Vision-Language Models}
Vision-Language Models (VLMs)~\citep{CLIP,ALBEF,BLIP,Florence-2} learn cross-modal representations from large-scale image-text pairs. Among them, CLIP~\citep{CLIP} consists of an image encoder and a text encoder that project images and textual descriptions into a shared embedding space. This alignment enables classification through textual descriptions without additional training on the target task. Specifically, class names are inserted into predefined prompt templates, such as \textit{a photo of a [class]}, and encoded as textual prototypes. Each input image is then assigned to the class whose textual prototype has the highest similarity to its visual representation. Despite their strong zero-shot capabilities, VLMs suffer substantial performance degradation under domain shifts, motivating adaptation to downstream tasks. Existing studies adapt VLMs by optimizing prompts with labeled data~\citep{CoOp,CoCoOp,VLMsSurvey}. In this paper, we focus on unsupervised VLM adaptation, thereby eliminating the need for costly annotations.

\subsection{Unsupervised Vision-Language Model Adaptation}
In certain real-world scenarios, acquiring even a small number of annotations can be excessively expensive~\citep{CPL,CAP,AiR}. This limitation has motivated increasing interest in unsupervised VLM adaptation, where zero-shot predictions from VLMs are used as pseudo-labels for prompt tuning. Representative methods such as FPL and GRIP~\citep{FPL_GRIP} select the top-$k$ highest-confidence samples for each class, with GRIP further refining the pseudo-labels and progressively increasing $k$ during adaptation. Subsequent studies extend this framework by introducing partial labels~\citep{CPL}, mitigating prediction bias through calibrated sample selection~\citep{CAP}, or incorporating auxiliary supervision from generated images~\citep{AiR}. However, these methods implicitly assume that the adaptation vocabulary is complete. In contrast, we investigate a vocabulary-incomplete setting in which only a subset of class names is available during adaptation.

\subsection{Related Learning Paradigms}
Several learning paradigms relax the closed-set assumption and address different forms of openness. Open-set recognition~\citep{OSR2016} and out-of-distribution detection~\citep{OOD2016} reject samples outside the known classes or training distribution, whereas classes omitted from the adaptation vocabulary in VI-UVA still require fine-grained classification. Generalized category discovery~\citep{GCD2022,GET} classifies known classes and discovers novel classes by clustering unlabeled data, typically with access to labeled data from known classes, while VI-UVA uses only unlabeled data and an incomplete vocabulary to classify all classes. A detailed comparison of VI-UVA with these learning paradigms is shown in Appendix  \ref{app:comparison_with_related_learning_paradigms}. Overall, VI-UVA seeks to perform classification over all classes while relying on weaker supervision, making it more challenging.

\section{Proposed Method}
\textbf{Overview.} In this paper, we propose \textbf{L}abel \textbf{S}pace \textbf{R}econstruction (LSR) for vocabulary-incomplete unsupervised vision-language model adaptation. LSR consists of three stages, including cross-modal semantic projection  (Sec.~\ref{sec:CMSP}), latent label space reconstruction (Sec.~\ref{sec:LLSR}), and reconstruction space adaptation (Sec.~\ref{sec:RSA}). The overview and pseudo-code of LSR are detailed in Fig.~\ref{fig:overview} and Algorithm \ref{alg:LSR}, respectively.

\textbf{Problem Setup.} Let $\mathcal{D} _{\mathrm{U}}=\left\{ x_i  \right\} _{i=1}^{N}$ denote an unlabeled dataset containing $N$ samples and $\mathcal{C}_t = \left\{ c_j \right\} _{j=1}^{C}$ denote the set of all class names. Each sample in $\mathcal{D} _{\mathrm{U}}$ belongs to a class whose name is included in $\mathcal{C}_t$. We partition $\mathcal{C}_t$ into two disjoint subsets,
\begin{equation}
	\mathcal{C}_t=\mathcal{C}_p\cup\mathcal{C}_o,~~~
	\mathcal{C}_p\cap\mathcal{C}_o=\emptyset,
\end{equation}
where $\mathcal{C}_p$ and $\mathcal{C}_o$ denote the sets of provided- and omitted-class names, respectively. Specifically, the class names in $\mathcal{C}_p$ constitute the adaptation vocabulary $\mathcal{V}_p$, whereas those in $\mathcal{C}_o$ are unavailable during adaptation, even though their images are present in $\mathcal{D} _{\mathrm{U}}$. 

Given $\mathcal{D}_{\mathrm{U}}$ and $\mathcal{V}_p$, we aim to adapt the VLM without annotations or the class names in $\mathcal{C}_o$. The adapted model is evaluated on $\mathcal{C}_t$, with the goal of maximizing recognition performance across both provided and omitted classes.

\subsection{Cross-Modal Semantic Projection}
\label{sec:CMSP}
The adaptation vocabulary contains only a subset of class names. Pseudo-labeling with this incomplete vocabulary can assign omitted-class images to provided classes, which introduces biased supervision and encourages the model to overfit the adaptation vocabulary. Motivated by~\cite{Bag-of-regions}, we introduce the \textbf{C}ross-\textbf{M}odal \textbf{S}emantic \textbf{P}rojector (CMSP), a lightweight linear mapping that transfers visual representations into the textual embedding space, providing a basis for reconstructing the latent semantic anchors associated with omitted classes.

Denote the image encoder and text encoder of VLMs as $\mathcal{I} \left( \cdot \right) $ and $\mathcal{T}\left( \cdot \right)$, respectively. We extract a visual embedding $\mathbf{x}_i\in \mathbb{R} ^d$ of each $x_i\in \mathcal{D}_{\mathrm{U}}$ through $\mathcal{I} \left( \cdot \right) $ and extract a textual embedding $\mathbf{c}_j\in \mathbb{R} ^d$ of each $c_j\in \mathcal{V}_p$ through $\mathcal{T}\left( \cdot \right)$. Based on image-text similarity, these textual embeddings are used to generate pseudo-labels for unlabeled images. Following prior work~\citep{FPL_GRIP}, we select the same number of highest-confidence images for each provided class to construct a balanced pseudo-labeled dataset $\mathcal{D} _\mathrm{P}=\left\{ \left( x_m, p_m \right) \right\} _{m=1}^{M}$, where $p_m \in \left\{1,\cdots,|\mathcal{V}_p|\right\}$ denotes the pseudo-label assigned to image $x_m$. Appendix \ref{app:pseudo_labeled_dataset_generation} provides the details of pseudo-labeled dataset generation. $\mathcal{D} _\mathrm{P}$ is then used to train the cross-modal semantic projector.

The cross-modal semantic projector, parameterized by $\phi$, maps the visual embedding $\mathbf{x}_m \in \mathbb{R}^d$ of each pseudo-labeled image $x_m$ into a flattened token-level representation $\mathbf{h}_m \in \mathbb{R}^{Ld}$, where $L$ is the predefined number of pseudo-tokens. We reshape $\mathbf{h}_m$ into a pseudo-token sequence $\mathbf{H}_m \in \mathbb{R} ^{L\times d}$ and feed it into the frozen text encoder $\mathcal{T}\left( \cdot \right)$ to obtain the generated textual embedding $\mathbf{t}_m \in \mathbb{R}^d$. The complete projection process can be expressed as:
\begin{equation}
\label{eq:projection}
	\mathbf{t}_m =\mathcal{T}\left(\mathcal{R}_{L\times d}\left(f_{\phi}(\mathcal{I}(x_m))\right)\right),
\end{equation}
where $\mathcal{R}_{L\times d}$ represents the reshape operation and $f_\phi$ denotes the CMSP.
The training objective of the CMSP comprises a feature regression loss and a classification loss.

\textbf{Feature regression loss.}
To encourage the generated textual embedding $\mathbf{t}_m$ to preserve the category-level semantics, we minimize its feature distance to the corresponding provided-class prototype. Let $\mathbf{c}_{p_m}$ denote the textual embedding of the pseudo-label $p_m$. Given a mini-batch of $B$ pseudo-labeled samples, the feature regression loss is defined as:
\begin{equation}
	\mathcal{L}_{feat}
	=\frac{1}{Bd}\sum_{m=1}^{B}
	\left\|\mathbf{t}_m-\mathbf{c}_{p_m}\right\|_1,
	\label{eq:feature_regression_loss}
\end{equation}
where $d$ is the embedding dimension. This objective directly encourages the generated textual embeddings to align closely with their corresponding class prototypes.

\textbf{Classification loss.}
Feature regression loss alone does not directly optimize the relative similarities between a generated textual embedding and different provided-class prototypes. Therefore, we calculate its similarities to all provided-class prototypes and apply a classification loss:
\begin{equation}
	\mathcal{L}_{cls}
	=-\frac{1}{B}\sum_{m=1}^{B}\log
	\frac{\exp\left(\mathbf{t}_{m}^\top\mathbf{c}_{p_m}\right)}
	{\sum_{j=1}^{|\mathcal{C}_p|}\exp\left(\mathbf{t}_m^\top\mathbf{c}_j\right)},
\end{equation}
where $|\mathcal{C}_p|$ represents the number of provided classes. This loss encourages each generated textual embedding to be more similar to its corresponding class prototype than to the prototypes of other provided classes. Finally, the overall objective for training the CMSP is formulated as:
\begin{equation}
	\mathcal{L}_{\mathrm{CMSP}}=
	\mathcal{L}_{feat} + \alpha\mathcal{L}_{cls},
	\label{eq:CMSP_loss}
\end{equation}
where $\alpha$ is set to $0.1$ in our implementation. During training, the image and text encoders remain frozen, and only the parameters $\phi$ of the CMSP are optimized.

\subsection{Latent Label Space Reconstruction}
\label{sec:LLSR}
By applying the projection process in Eq.~\ref{eq:projection} to all images in $\mathcal{D}_{\mathrm{U}}$, we obtain the generated textual embeddings $\mathbf{G}=\left[ \mathbf{t}_{1},\mathbf{t}_{2},\cdots, \mathbf{t}_N \right] ^{\top}\in \mathbb{R} ^{N\times d}$ derived from both provided- and omitted-class images. Thus, we exploit these embeddings to reconstruct a latent label space.

\textbf{Latent semantic clustering.}
Since the information of omitted classes are unavailable during adaptation, we adopt an over-clustering strategy to allocate additional semantic capacity beyond the provided classes. Specifically, the number of clusters is set to $2|\mathcal{C}_p|$,
which is twice the number of provided classes. We perform K-means clustering on the generated textual embeddings $\mathbf{G}$ and
the resulting candidate latent semantic anchors are denoted by $\mathcal{M} =\left\{ \boldsymbol{\mu }_k \right\} _{k=1}^{2\left| \mathcal{C} _p \right|}$, where $\boldsymbol{\mu }_k \in \mathbb{R} ^d$ represents the $k$-th cluster center. These cluster centers serve as data-driven semantic anchors and reserve additional capability to represent the concepts beyond the adaptation vocabulary. Importantly, they are not required to recover the ground-truth class names or correspond one-to-one with the underlying classes.

Over-clustering may produce unreliable anchors associated with outliers or noise. We assess each candidate anchor by the number of unlabeled images assigned to it and discard those supported by very few images. This filtering procedure aims to reduce the influence of outliers or noise, while retaining the dominant semantic structures present in the unlabeled data. The reconstructed latent label space is consequently denoted as $\mathcal{M}_{ret} = \left\{ \boldsymbol{\mu }_k \right\} _{k=1}^{r}$, where $r$ represents the number of retained anchors. Appendix \ref{app:filtering_procedure_of_unreliable_centers} provides the details of the filtering procedure.

\subsection{Reconstructed Space Adaptation}
\label{sec:RSA}
Since the retained anchors cannot be directly used as text encoder inputs, we associate them with pseudo-token sequences generated by CMSP for prompt tuning.
Recall that CMSP maps the visual representation of each unlabeled image $x_i$ into a pseudo-token sequence $\mathbf{H}_i\in\mathbb{R}^{L\times d}$ and the text encoder generates the corresponding textual embedding $\mathbf{t}_i=\mathcal{T}(\mathbf{H}_i)$. For each retained anchor $\boldsymbol{\mu}_k\in\mathcal{M}_{ret}$, we retrieve its most similar generated textual embedding $\mathbf{t}_{i_{k}^{*}}$, where
\begin{equation}
	i_{k}^{*}=\underset{i\in \left\{1,...,N\right\}}{\mathrm{arg}\max}\frac{\boldsymbol{\mu }_{k}^{\top}\mathbf{t}_i}{\left\| \boldsymbol{\mu }_k \right\| _2\left\| \mathbf{t}_i \right\| _2}.
	\label{eq:retrieval}
\end{equation}
Consequently, the pseudo-token sequence associated with the $k$-th anchor is denoted as $\mathbf{H}_{i_{k}^{*}}$. 
The reconstructed pseudo-vocabulary and corresponding textual representations are denoted by:
\begin{equation}
	\mathcal{V}_{\mathrm{R}}=\left\{\mathbf{H}_{i_{k}^{*}} \right\} _{k=1}^{r},~~
	\mathbf{T}_{\mathrm{R}}=\left\{\mathbf{t}_{i_{k}^{*}} \right\} _{k=1}^{r}, 
\end{equation}
where $r$ represents the number of retained semantic anchors. We exploit $\mathbf{T}_\mathrm{R}$ to construct a balanced pseudo-labeled dataset $\mathcal{D}_{\mathrm{R}}=\left\{ \left( x_q, p_q \right) \right\} _{q=1}^{Q}$ following the same procedure described in the Appendix \ref{app:pseudo_labeled_dataset_generation}, where $Q$ denotes the number of selected images and $p_q\in\left\{1,\ldots,r\right\}$ represents the latent pseudo-label of the image $x_q$. 

Then, we perform textual prompt tuning \citep{CoOp} with $\mathcal{V}_{\mathrm{R}}$ and $\mathcal{D}_{\mathrm{R}}$. Let $\mathbf{V}_{\theta}=\left[\mathbf{v}_1,\cdots,\mathbf{v}_S\right]^{\top}\in\mathbb{R}^{S\times d}$ denote the shared learnable context tokens, where $S$ is the number of shared tokens. For the $k$-th pseudo-token sequence in $\mathcal{V}_{\mathrm{R}}$, its prompt-conditioned textual prototype is computed as:
\begin{equation}
	\mathbf{z}_k=\mathcal{T}\left(\left[\mathbf{V}_{\theta};
	\mathbf{H}_{i_{k}^{*}}\right]\right),
\end{equation}
where $[\cdot ; \cdot]$ denotes token-level concatenation. Given a mini-batch of $B$ pseudo-labeled samples $\left(x_q,p_q\right)$ from $\mathcal{D}_{\mathrm{R}}$, the cross-entropy loss for optimizing $\mathbf{V}_{\theta}$ is defined as:
\begin{equation}
	\mathcal{L}_{\mathrm{R}}
	=-\frac{1}{B}\sum_{q=1}^{B}\log \frac{\exp \left( \mathrm{sim}\left( \mathcal{I} (x_q),\mathbf{z}_{p_q} \right) /\tau \right)}{\sum_{\ell =1}^r{\exp \left( \mathrm{sim}\left( \mathcal{I} (x_q),\mathbf{z}_{\ell} \right) /\tau \right)}},
\end{equation}
where $\tau$ is the temperature parameter and $\mathrm{sim}\left(\cdot,\cdot\right)$ represents cosine similarity. 
This objective updates $\mathbf{V}_{\theta}$ using supervision from the pseudo-vocabulary $\mathcal{V}_\mathrm{R}$, which incorporates semantic structures derived from both provided and omitted classes, thereby helping preserve omitted-class recognition. To further enhance provided-class recognition, we also optimize $\mathbf{V}_{\theta}$ with the balanced pseudo-labeled dataset $\mathcal{D}_{\mathrm{P}}$. For each provided-class name $c_j\in\mathcal{V}_p$, we construct a prompt-conditioned textual prototype as:
\begin{equation}
	\widetilde{\mathbf{c}}_j
	=\mathcal{T}\left(\left[\mathbf{V}_{\theta};\operatorname{Token}(c_j)\right]\right),
\end{equation}
where $\operatorname{Token}(c_j)$ denotes the token embeddings of the class name $c_j$. Given a mini-batch of $B$ pseudo-labeled samples $(x_m,p_m)$ from $\mathcal{D}_{\mathrm{P}}$, the cross-entropy loss is:
\begin{equation}
	\mathcal{L}_{\mathrm{P}}=-\frac{1}{B}\sum_{m=1}^B\log{\frac{\exp \left( \mathrm{sim}\left( \mathcal{I} (x_m),\widetilde{\mathbf{c}}_{p_m} \right) /\tau \right)}{\sum_{\ell =1}^{\left| \mathcal{C} _p \right|}{\exp \left( \mathrm{sim}\left( \mathcal{I} (x_m),\widetilde{\mathbf{c}}_{\ell} \right) /\tau \right)}}}.
\end{equation}
The overall objective for textual prompt tuning is:
\begin{equation}
	\mathcal{L}_{\mathrm{all}}
	=\lambda\mathcal{L}_{\mathrm{R}}+\mathcal{L}_{\mathrm{P}},
	\label{eq:loss_all}
\end{equation}
where $\lambda$ balances supervision from the reconstructed latent label space and the provided classes.
During prompt tuning, the image encoder, text encoder, CMSP, and pseudo-vocabulary $\mathcal{V}_{\mathrm{R}}$ remain frozen, while only the shared context tokens $\mathbf{V}_{\theta}$ are optimized. At inference time, we combine the shared context tokens with each class name in $\mathcal{C}_t$ to recognize both provided and omitted classes.

\subsection{Theoretical Analysis}
We establish an omitted-class semantic discrepancy bound for LSR, identifying how projection quality, cross-modal recomposition,
and representation approximation jointly govern the bound.

\begin{theorem}[Omitted-Class Semantic Discrepancy Bound]
	\label{thm1:semantic_discrepancy}
	Let $\mathcal{F}(\mathbf{x})=\mathcal{T}\left(\mathcal{R}_{L\times d}\left(f_{\phi}(\mathbf{x})\right)\right)$
	denote the complete visual-to-textual mapping. Suppose that $\mathcal{F}$ and its Jacobian are Lipschitz continuous on a convex domain containing the relevant visual
	representations. Then,
	\begin{equation}
		\mathcal{E}^{\mathrm{rec}}_o\leq\frac{1}{2}\left(
		\varepsilon_{\mathrm{proj}}+\mathfrak{b}_o+\mathfrak{r}_o\right)^2, 
		\label{eq:main_theoretical_bound}
	\end{equation}
	where $\mathcal{E}^{\mathrm{rec}}_o$ denotes the average semantic discrepancy between the nearest retrieved textual representations and corresponding omitted-class prototypes, $\varepsilon_{\mathrm{proj}}$ measures the error of cross-modal semantic projection, $\mathfrak{b}_o$ measures the discrepancy
	of recomposing omitted-class visual and textual semantics, and $\mathfrak{r}_o$ measures the approximation error of the generated textual representations to their nearest retrieved textual representations.
\end{theorem}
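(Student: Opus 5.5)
The plan is to pin down the three error terms so that the bound comes from a per-class triangle inequality followed by squaring. For each omitted class $c\in\mathcal{C}_o$ let $\mathbf{c}_c$ be its textual prototype and $\bar{\mathbf{x}}_c$ the mean visual embedding of its images in $\mathcal{D}_{\mathrm{U}}$. Let $\mathbf{t}^{\mathrm{rec}}_c$ be the nearest retrieved textual representation in $\mathbf{T}_{\mathrm{R}}$ associated with class $c$. I will take the discrepancy to be $\mathcal{E}^{\mathrm{rec}}_o=\frac{1}{|\mathcal{C}_o|}\sum_{c}\frac12\|\mathbf{t}^{\mathrm{rec}}_c-\mathbf{c}_c\|_2^2$. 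For unit-normalized embeddings this equals $1-\cos$, which is why the factor $\tfrac12$ appears. Inserting the intermediate points $\mathcal{F}(\bar{\mathbf{x}}_c)$ and $\frac{1}{n_c}\sum_{i\in c}\mathcal{F}(\mathbf{x}_i)$ and applying the triangle inequality gives
\begin{equation*}
\|\mathbf{t}^{\mathrm{rec}}_c-\mathbf{c}_c\|\le \|\mathcal{F}(\bar{\mathbf{x}}_c)-\mathbf{c}_c\| + \Bigl\|\tfrac{1}{n_c}\textstyle\sum_{i\in c}\mathcal{F}(\mathbf{x}_i)-\mathcal{F}(\bar{\mathbf{x}}_c)\Bigr\| + \Bigl\|\mathbf{t}^{\mathrm{rec}}_c-\tfrac{1}{n_c}\textstyle\sum_{i\in c}\mathcal{F}(\mathbf{x}_i)\Bigr\|.
\end{equation*}
Each summand is then controlled by one of $\varepsilon_{\mathrm{proj}}$, $\mathfrak{b}_o$ and $\mathfrak{r}_o$.

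\textbf{Projection and approximation terms.} I define $\varepsilon_{\mathrm{proj}}$ as the uniform (or worst-case) error $\sup\|\mathcal{F}(\bar{\mathbf{x}})-\mathbf{c}\|$ over class means. This is the quantity that CMSP training with $\mathcal{L}_{feat}$ drives down on provided classes, and we assume it transfers to omitted classes. I define $\mathfrak{r}_o$ as the average over $c\in\mathcal{C}_o$ of the third term. That term is the gap between the class-averaged generated embedding and its nearest retrieved anchor, which K-means followed by the retrieval of Eq.~\ref{eq:retrieval} is designed to keep small. With these choices the first and third terms are bounded by definition.

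\textbf{Recomposition term (main obstacle).} The middle term must become a quantity $\mathfrak{b}_o$ that is controlled by the regularity hypotheses. I plan to Taylor-expand $\mathcal{F}$ around $\bar{\mathbf{x}}_c$ using the integral form of the remainder, which is legitimate because the domain is convex. Since $\frac{1}{n_c}\sum_i(\mathbf{x}_i-\bar{\mathbf{x}}_c)=0$, the linear term $J_{\mathcal{F}}(\bar{\mathbf{x}}_c)(\mathbf{x}_i-\bar{\mathbf{x}}_c)$ cancels on averaging. Lipschitz continuity of the Jacobian with constant $L_J$ bounds each remainder by $\frac{L_J}{2}\|\mathbf{x}_i-\bar{\mathbf{x}}_c\|^2$. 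This gives $\mathfrak{b}_o=\frac{L_J}{2}\cdot\frac{1}{|\mathcal{C}_o|}\sum_c \frac{1}{n_c}\sum_{i\in c}\|\mathbf{x}_i-\bar{\mathbf{x}}_c\|^2$, which can be read as the cost of recomposing per-image visual semantics into a class-level textual semantics. The Lipschitz constant of $\mathcal{F}$ itself enters only if I need to move $\bar{\mathbf{x}}_c$ to a nearby reference point, for instance to compare against the provided-class training distribution inside $\varepsilon_{\mathrm{proj}}$. The delicate part is making sure the cancellation of the linear term matches whatever averaging defines $\mathfrak{b}_o$.

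\textbf{Averaging and squaring.} After summing the three bounds per class, I would average over $c\in\mathcal{C}_o$. Squaring requires care, since the average of the squares is not bounded by the square of the average. To avoid this I would state $\varepsilon_{\mathrm{proj}}$, $\mathfrak{b}_o$ and $\mathfrak{r}_o$ as worst-case quantities over omitted classes, or equivalently as root-mean-square quantities, and then use Minkowski's inequality in $\ell_2$ over classes. That yields $\bigl(\frac{1}{|\mathcal{C}_o|}\sum_c\|\mathbf{t}^{\mathrm{rec}}_c-\mathbf{c}_c\|^2\bigr)^{1/2}\le\varepsilon_{\mathrm{proj}}+\mathfrak{b}_o+\mathfrak{r}_o$. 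Squaring and multiplying by $\tfrac12$ then gives Eq.~\ref{eq:main_theoretical_bound}.
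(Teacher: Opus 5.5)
Under your own definitions each inequality is valid, but the bound holds only because $\varepsilon_{\mathrm{proj}}$ absorbs the hard part. You define it as a supremum of $\|\mathcal{F}(\bar{\mathbf{x}})-\mathbf{c}\|$ over class means \emph{including omitted classes}, and then write ``we assume it transfers to omitted classes.'' That transfer is what the theorem is meant to establish.

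The CMSP is trained, and can only be evaluated, on the provided-class pseudo-labeled set $\mathcal{D}_{\mathrm{P}}$. So ``the error of cross-modal semantic projection'' must be a provided-class quantity. In the paper it is $\varepsilon_{\mathrm{proj}}=\max_{j\le p}(e_j+\frac{\kappa}{2}v_j)$, where $e_j$ is the per-group $\ell_1$ regression error controlled by $\mathcal{L}_{feat}$ and $v_j$ is the visual variance of group $j$. With that reading, your first term $\|\mathcal{F}(\bar{\mathbf{x}}_c)-\mathbf{c}_c\|$ for $c\in\mathcal{C}_o$ is bounded by nothing in your argument. One symptom is that the Lipschitz constant $\ell$ of $\mathcal{F}$, a hypothesis of the theorem, is never used. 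You gesture at the missing step when you mention moving $\bar{\mathbf{x}}_c$ toward the provided-class distribution, but that step is the crux and is not carried out. Likewise, your $\mathfrak{b}_o$ is a within-omitted-class Jensen gap. It does not measure how well omitted-class visual and textual semantics can be recomposed from the provided ones.

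The missing idea is the provided-to-omitted bridge. Fix an omitted sample $i$ and any $\mathbf{w}\in\Delta_p$, and let $\ell$ and $\kappa$ be the Lipschitz constants of $\mathcal{F}$ and its Jacobian. Set $\mathbf{x}_{\mathbf{w}}=\sum_j w_j\bar{\mathbf{x}}_j$ and $\mathbf{c}_{\mathbf{w}}=\sum_j w_j\mathbf{c}_j$, built from the \emph{provided} group means and prototypes. Insert $\mathcal{F}(\mathbf{x}_{\mathbf{w}})$, $\sum_j w_j\mathcal{F}(\bar{\mathbf{x}}_j)$ and $\mathbf{c}_{\mathbf{w}}$ between $\mathbf{t}_i$ and $\mathbf{c}_{y_i}$. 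The four resulting gaps are bounded as follows:
\begin{itemize}
\item the first is at most $\ell\|\mathbf{x}_i-\mathbf{x}_{\mathbf{w}}\|$;
\item the second is at most $\frac{\kappa}{2}\sum_j w_j\|\bar{\mathbf{x}}_j-\mathbf{x}_{\mathbf{w}}\|^2$, which is your linear-term cancellation applied across provided means;
\item the third is at most $\varepsilon_{\mathrm{proj}}$;
\item the fourth is the textual residual $\|\mathbf{c}_{\mathbf{w}}-\mathbf{c}_{y_i}\|$.
\end{itemize}
Minimizing over $\mathbf{w}$ gives $b_i$ and $\|\mathbf{t}_i-\mathbf{c}_{y_i}\|\le\varepsilon_{\mathrm{proj}}+b_i$. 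The same cancellation within each provided group is what links $\|\mathcal{F}(\bar{\mathbf{x}}_j)-\mathbf{c}_j\|$ to the trained per-sample loss. Your Taylor argument is the right tool, applied to the wrong set of points.

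Two smaller points. The paper's $\mathcal{E}^{\mathrm{rec}}_o$ is a per-sample KL divergence $D_{\mathrm{KL}}(q(\mathbf{c}_{y_i})\|q(\widehat{\mathbf{t}}_i))$ between softmax responses over all $C$ prototypes. Its factor $\frac12$ comes from the log-sum-exp Hessian being dominated by the identity when prototypes are unit-norm. Your half-squared-distance definition upper-bounds this once that lemma is added, so that difference is harmless. However, your ``$1-\cos$'' remark requires $\mathbf{t}^{\mathrm{rec}}_c$ to be unit-norm, which is not given. Your RMS/Minkowski treatment of the squaring step matches the paper's.
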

Theorem~\ref{thm1:semantic_discrepancy} reveals three factors affecting
omitted-class semantic discrepancy. Accurate projection, lower cross-modal recomposition discrepancy and representation approximation error jointly lead to a tighter bound. The complete proof and further analysis of these factors are provided in Appendix~\ref{app:proof}.

\section{Experiments}
To demonstrate the effectiveness of the proposed method, we conduct experiments across eight datasets and compare with ten SOTA methods. In addition, we conduct ablation studies and analytical experiments to explain why our method is effective.

\subsection{Experimental Settings}
\label{sec:experimental_settings}
\textbf{Datasets and evaluations.} We conduct experiments on eight image classification datasets from diverse domains, including DTD (DTD.) ~\citep{DTD}, FGVC-Aircraft (FA.)~\citep{FGVC-Aircraft}, CUB (CUB.)~\citep{CUB}, Flowers102 (Flw.)~\citep{Flowers102}, OxfordPets (Pets.)~\citep{OxfordPets}, StanfordCars (Cars.)~\citep{StanfordCars}, RESISC45 (RES.)~\citep{RESISC45}, and CIFAR100 (CIFAR.)~\citep{cifar100}. For each dataset, we partition the set of all class names $\mathcal{C}_t$ into provided and omitted subsets. We define the \textbf{omission ratio} as $\gamma=|\mathcal{C}_o|/|\mathcal{C}_t|$ and evaluate multiple values of $\gamma$ to simulate different degrees of vocabulary incompleteness. We evaluate model performance using three metrics, including all-class accuracy, omitted-class accuracy, and omitted-group recall. Further details can be found in Appendix \ref{app:datasets_and_evaluation}.

\textbf{Baselines.} We compare our method with ten SOTA methods,
namely, CLIP~\citep{CLIP}, FPL~\citep{FPL_GRIP}, GRIP ~\citep{FPL_GRIP}, CPL~\citep{CPL}, UEO~\citep{UEO}, AiR~\citep{AiR}, CAP~\citep{CAP}, MCM~\citep{MCM}, M-Tuning~\citep{M-Tuning} and BGAT~\citep{BGAT}. Appendix \ref{app:comparison_methods} describes the details about these methods. For a fair comparison, all adaptation methods use the same pre-trained CLIP backbone and employ textual prompt tuning~\citep{CoOp}. Appendix \ref{app:implementation_details} presents the implementation details.

\subsection{Main Results}
Tables~\ref{tab:main_results_gamma_0.2} and~\ref{tab:main_results_gamma_0.3} compare the all-class accuracy of different methods across eight datasets at omission ratios of $\gamma=0.2$ and $\gamma=0.3$, respectively. Each result is reported as the mean and standard deviation over three random seeds for optimization (1, 2, and 3). Our method consistently achieves the highest accuracy on all eight datasets. At $\gamma=0.2$ and $\gamma=0.3$, LSR outperforms the strongest baseline by an average of \textbf{2.47} and \textbf{2.57} percentage points, respectively. Specifically, LSR surpasses the best baseline on RES by \textbf{3.74} percentage points at $\gamma=0.3$. These results demonstrate the effectiveness of LSR across different datasets and omission ratios, suggesting that the reconstructed latent label space complements the restricted supervision provided by the adaptation vocabulary. We provide more main results in Appendix \ref{app:additional_main_results}.

\vspace{-0.2cm}
\begin{table*}[ht]
	\centering
	\caption{Comparison results of all-class accuracy (\%) at $\gamma=0.2$. Values are expressed as $\mathrm{mean}_{\mathrm{std}}$. The highest accuracies are bold.}
	\vspace{-0.2cm}
	\label{tab:main_results_gamma_0.2}
	\small
	\renewcommand{\arraystretch}{1.1}
	\resizebox{1\linewidth}{!}{
		\begin{tabular}{l|ccccccccc}
			\toprule[1.1pt]
			Methods & Flw. & FA. & CUB. & DTD. & Pets. & Cars. & RES. & CIFAR. & Avg. \\
			\cmidrule(lr){1-1} \cmidrule(lr){2-10}
			CLIP & $63.63_{0.00}$ & $17.64_{0.00}$ & $51.77_{0.00}$ & $43.19_{0.00}$ & $84.51_{0.00}$ & $58.93_{0.00}$ & $54.48_{0.00}$ & $64.19_{0.00}$ & $54.79_{0.00}$ \\
			FPL & $65.99_{0.70}$ & $17.11_{0.46}$ & $51.09_{0.58}$ & $44.29_{0.67}$ & $87.95_{0.61}$ & $57.09_{0.71}$ & $58.08_{0.68}$ & $65.05_{0.55}$ & $55.83_{0.62}$ \\
			GRIP & $60.83_{0.25}$ & $14.68_{0.37}$ & $42.21_{0.72}$ & $44.24_{2.00}$ & $79.87_{0.62}$ & $50.53_{0.35}$ & $62.57_{1.38}$ & $65.02_{1.17}$ & $52.49_{0.86}$ \\
			CPL & $63.50_{1.90}$ & $14.97_{0.65}$ & $46.82_{0.71}$ & $45.53_{0.33}$ & $77.56_{1.10}$ & $47.17_{0.57}$ & $64.06_{0.92}$ & $62.17_{0.53}$ & $52.72_{0.84}$ \\
			CAP & $65.37_{1.60}$ & $15.74_{0.91}$ & $49.18_{0.75}$ & $45.85_{2.25}$ & $80.43_{4.37}$ & $53.92_{0.24}$ & $65.57_{1.32}$ & $65.25_{0.04}$ & $55.16_{1.44}$ \\
			AiR & $60.61_{1.13}$ & $14.98_{0.11}$ & $41.62_{0.72}$ & $45.87_{1.42}$ & $78.88_{2.17}$ & $49.84_{0.11}$ & $63.72_{1.02}$ & $65.33_{0.66}$ & $52.61_{0.92}$ \\
			MCM & $61.81_{0.35}$ & $16.19_{1.23}$ & $45.53_{1.14}$ & $39.73_{0.97}$ & $79.47_{1.74}$ & $51.53_{0.32}$ & $53.11_{0.65}$ & $63.46_{0.45}$ & $51.35_{0.86}$ \\
			UEO & $61.74_{0.92}$ & $8.29_{2.95}$ & $44.52_{0.18}$ & $42.86_{0.20}$ & $82.73_{4.11}$ & $52.39_{1.24}$ & $64.94_{1.13}$ & $58.79_{0.97}$ & $52.03_{1.46}$ \\
			M-Tuning & $66.75_{0.29}$ & $17.31_{0.31}$ & $50.79_{0.23}$ & $44.70_{0.24}$ & $88.06_{0.55}$ & $57.26_{0.12}$ & $58.11_{0.42}$ & $65.41_{0.33}$ & $56.05_{0.31}$ \\
			BGAT & $58.72_{1.41}$ & $18.22_{0.77}$ & $48.24_{0.70}$ & $40.05_{0.16}$ & $82.60_{0.65}$ & $55.87_{0.34}$ & $54.47_{0.42}$ & $64.54_{0.07}$ & $52.84_{0.57}$ \\
			\cellcolor{Gray}LSR (Ours) & \cellcolor{Gray}$\textbf{66.89}_{0.60}$ & \cellcolor{Gray}$\textbf{19.34}_{0.41}$ & \cellcolor{Gray}$\textbf{53.27}_{0.22}$ &
			\cellcolor{Gray}$\textbf{47.02}_{0.54}$ & \cellcolor{Gray}$\textbf{88.49}_{0.60}$ & \cellcolor{Gray}$\textbf{60.15}_{0.34}$ &
			\cellcolor{Gray}$\textbf{65.82}_{0.35}$ & \cellcolor{Gray}$\textbf{67.14}_{0.53}$ & \cellcolor{Gray}$\textbf{58.52}_{0.45}$ \\
			\bottomrule[1.1pt]
	\end{tabular}}
\end{table*}
\vspace{-0.2cm}

\begin{table*}[ht]
	\centering
	\caption{Comparison results of all-class accuracy (\%) at $\gamma=0.3$. Values are expressed as $\mathrm{mean}_{\mathrm{std}}$. The highest accuracies are bold.}
	\vspace{-0.2cm}
	\label{tab:main_results_gamma_0.3}
	\small
	\renewcommand{\arraystretch}{1.1}
	\resizebox{1\linewidth}{!}{
		\begin{tabular}{l|ccccccccc}
			\toprule[1.1pt]
			Methods & Flw. & FA. & CUB. & DTD. & Pets. & Cars. & RES. & CIFAR. & Avg. \\
			\cmidrule(lr){1-1} \cmidrule(lr){2-10}
			CLIP & $63.63_{0.00}$ & $17.64_{0.00}$ & $51.77_{0.00}$ & $43.19_{0.00}$ & $84.51_{0.00}$ & $58.93_{0.00}$ & $54.48_{0.00}$ & $64.19_{0.00}$ & $54.79_{0.00}$ \\
			FPL & $63.29_{0.62}$ & $17.23_{0.10}$ & $50.70_{0.52}$ & $43.14_{0.56}$ & $86.80_{0.36}$ & $55.52_{0.91}$ & $57.14_{1.48}$ & $64.88_{0.44}$ & $54.84_{0.62}$ \\
			GRIP & $53.58_{2.79}$ & $14.65_{0.41}$ & $40.66_{1.31}$ & $42.29_{0.74}$ & $75.66_{1.11}$ & $49.50_{0.87}$ & $59.36_{1.73}$ & $63.06_{1.07}$ & $49.85_{1.25}$ \\
			CPL & $56.59_{0.53}$ & $15.12_{0.80}$ & $45.10_{0.51}$ & $41.72_{1.54}$ & $74.58_{1.75}$ & $45.02_{0.48}$ & $59.32_{1.15}$ & $61.21_{1.48}$ & $49.83_{1.03}$ \\
			CAP & $56.23_{0.51}$ & $14.88_{0.98}$ & $45.64_{0.38}$ & $43.09_{1.03}$ & $81.30_{3.24}$ & $53.64_{0.74}$ & $61.52_{0.36}$ & $64.80_{0.33}$ & $52.64_{0.95}$ \\
			AiR & $57.07_{1.61}$ & $14.11_{1.35}$ & $40.32_{1.02}$ & $42.52_{1.45}$ & $81.78_{1.19}$ & $50.18_{1.30}$ & $59.70_{0.35}$ & $62.69_{0.46}$ & $51.05_{1.09}$ \\
			MCM & $58.76_{0.48}$ & $14.44_{2.23}$ & $44.99_{1.12}$ & $38.10_{0.83}$ & $81.16_{1.10}$ & $49.88_{1.02}$ & $51.81_{0.19}$ & $63.70_{0.19}$ & $50.36_{0.90}$ \\
			UEO & $60.19_{1.82}$ & $7.23_{1.80}$ & $43.66_{2.05}$ & $40.60_{1.55}$ & $81.71_{0.88}$ & $49.21_{2.33}$ & $60.78_{1.82}$ & $53.76_{1.38}$ & $49.64_{1.70}$ \\
			M-Tuning & $64.16_{0.79}$ & $17.02_{0.68}$ & $50.46_{0.39}$ & $43.74_{0.55}$ & $86.89_{0.26}$ & $55.22_{0.21}$ & $58.65_{0.95}$ & $65.07_{0.47}$ & $55.15_{0.54}$ \\
			BGAT & $57.72_{1.33}$ & $17.15_{0.06}$ & $47.27_{0.87}$ & $39.15_{1.19}$ & $81.87_{0.41}$ & $55.50_{0.78}$ & $51.87_{0.64}$ & $64.80_{0.25}$ & $51.92_{0.69}$ \\
			\cellcolor{Gray}LSR (Ours) & \cellcolor{Gray}$\textbf{65.50}_{0.42}$ & \cellcolor{Gray}$\textbf{18.92}_{0.18}$ & \cellcolor{Gray}$\textbf{53.00}_{0.50}$ &
			\cellcolor{Gray}$\textbf{45.32}_{0.52}$ & \cellcolor{Gray}$\textbf{87.53}_{0.33}$ & \cellcolor{Gray}$\textbf{59.68}_{0.29}$ &
			\cellcolor{Gray}$\textbf{65.26}_{0.41}$ & \cellcolor{Gray}$\textbf{66.53}_{0.14}$ & \cellcolor{Gray}$\textbf{57.72}_{0.35}$ \\
			\bottomrule[1.1pt]
	\end{tabular}}
\end{table*}
\vspace{-0.4cm}

\subsection{Ablation Studies}
Table~\ref{tab:ablation_study} examines the contributions of the key components in our method at $\gamma=0.4$. We report all-class accuracy of different ablation variants, with each result expressed as $\mathrm{mean}_{\mathrm{std}}$. The variant trained solely with $\mathcal{L}_{\mathrm{P}}$ improves average accuracy slightly, from 55.16\% to 55.49\%. This limited improvement stems from supervision restricted to the adaptation vocabulary while neglecting to preserve omitted-class recognition. Incorporating $\mathcal{L}_{\mathrm{R}}$ extends supervision to the latent label space reconstructed from CMSP-generated representations. Thus, we evaluate the contribution of each loss used to train the CMSP. Training the CMSP with  only $\mathcal{L}_{\mathrm{cls}}$ or $\mathcal{L}_{\mathrm{feat}}$ achieves average accuracies of 56.97\% and 56.69\%, respectively. Both variants outperform prompt tuning with $\mathcal{L}_{\mathrm{P}}$ alone, supporting the benefit of reconstruction-based supervision. Combining both losses improves average accuracy to \textbf{57.54}\% and yields the best results on all six datasets. This suggests that $\mathcal{L}_{\mathrm{cls}}$ and $\mathcal{L}_{\mathrm{feat}}$ complement each other, enabling more effective latent label space reconstruction. Appendix \ref{app:more_ablation_studies} also compares the omitted-class accuracy between variants with and without $\mathcal{L}_{\mathrm{R}}$.

\begin{table*}[ht] 
	\centering 
	\caption{Comparison of all-class accuracy (\%) for ablation variants at $\gamma=0.4$. Values are expressed as $\mathrm{mean}_{\mathrm{std}}$. The highest accuracies are bold.} 
	\vspace{-0.2cm}
	\label{tab:ablation_study} 
	\small
	\renewcommand{\arraystretch}{1.1} 
	\resizebox{1.0\linewidth}{!}{ 
		\begin{tabular}{cccc|ccccccc} 
			\toprule[1.1pt] 
			$\mathcal{L}_\mathrm{P}$ & $\mathcal{L}_\mathrm{R}$ & $\mathcal{L}_\mathrm{feat}$ & $\mathcal{L}_\mathrm{cls}$ & Flw. & FA. & CUB. & Pets. & Cars. & RES. & Avg. \\  
			\midrule 
			\textbf{\ding{55}} & \textbf{\ding{55}} & \textbf{\ding{55}} & \textbf{\ding{55}} & $63.63_{0.00}$ & $17.64_{0.00}$ & $51.77_{0.00}$ & $84.51_{0.00}$ & $58.93_{0.00}$ & $54.48_{0.00}$ & $55.16_{0.00}$  \\  
			\textbf{\checkmark} & \textbf{\ding{55}} & \textbf{\ding{55}} & \textbf{\ding{55}} & $62.45_{0.65}$ & $17.16_{0.56}$ & $50.36_{0.99}$ & $83.19_{0.89}$ & $57.07_{0.47}$ & $62.72_{0.55}$ & $55.49_{0.69}$ \\  
			\textbf{\checkmark} & \textbf{\checkmark} & \textbf{\ding{55}} & \textbf{\checkmark} & $63.79_{1.16}$ & $18.33_{0.26}$ & $51.70_{0.48}$ & $85.75_{0.60}$ & $58.78_{0.13}$ & $63.49_{0.30}$ & $56.97_{0.49}$ \\ 
			\textbf{\checkmark} & \textbf{\checkmark} & \textbf{\checkmark} & \textbf{\ding{55}} & $63.12_{0.49}$ & $18.07_{0.65}$ & $52.02_{0.36}$ & $85.78_{0.78}$ & $58.58_{0.25}$ & $62.56_{0.57}$ & $56.69_{0.52}$ \\ 
			\cellcolor{Gray}\textbf{\checkmark} & \cellcolor{Gray}\textbf{\checkmark} & \cellcolor{Gray}\textbf{\checkmark} & \cellcolor{Gray}\textbf{\checkmark} & \cellcolor{Gray}$\textbf{64.36}_{0.72}$ & \cellcolor{Gray}$\textbf{18.55}_{0.04}$ & \cellcolor{Gray}$\textbf{52.43}_{0.18}$ & \cellcolor{Gray}$\textbf{86.33}_{0.54}$ & \cellcolor{Gray}$\textbf{59.06}_{0.48}$ & \cellcolor{Gray}$\textbf{64.52}_{0.39}$ & \cellcolor{Gray}$\textbf{57.54}_{0.39}$ \\  
			\bottomrule[1.1pt] 
	\end{tabular}} 
\end{table*}
\vspace{-0.2cm}

\subsection{More Analysis}
\label{sec:more_analysis}
\textbf{Our method mitigates the adaptation bias}, as evidenced by consistent improvements in average omitted-group recall (OGR) and omitted-class accuracy (OCA) across eight datasets. Higher OGR indicates fewer omitted-class samples being misclassified as provided classes, and higher OCA demonstrates better fine-grained recognition of omitted classes. Compared with the best adaptation method for each metric, LSR improves OGR and OCA by \textbf{5.06} and \textbf{5.72} percentage points at $\gamma=0.2$, \textbf{4.66} and \textbf{4.38} percentage points at $\gamma=0.3$, and \textbf{4.22} and \textbf{5.08} percentage points at $\gamma=0.4$, respectively. Moreover, LSR largely preserves the zero-shot CLIP baseline's performance in both metrics and surpasses it in several settings. These results support the effectiveness of reconstructed semantic supervision in reducing adaptation bias toward the provided classes and preserving omitted-class recognition.

\vspace{-0.1cm}
\begin{figure*}[ht]
	\centering
	\begin{subfigure}[t]{0.32\textwidth}
		\centering
		\includegraphics[width=\linewidth]{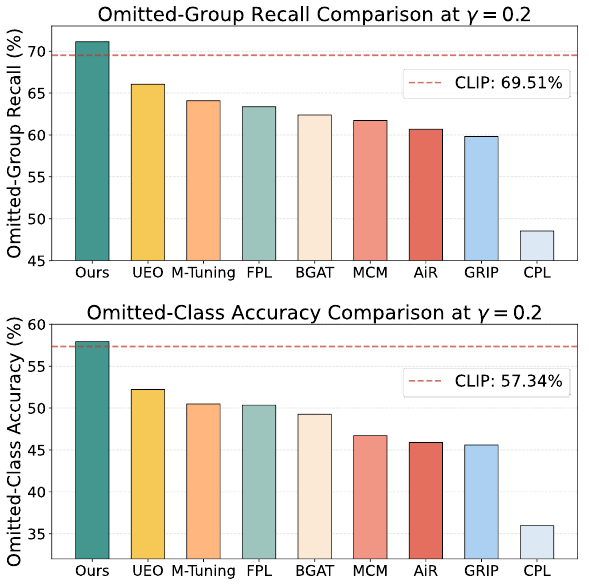}
		\caption{$\gamma=0.2$}
		\label{fig:0.2_avg}
	\end{subfigure}
	\hfill
	\begin{subfigure}[t]{0.32\textwidth}
		\centering
		\includegraphics[width=\linewidth]{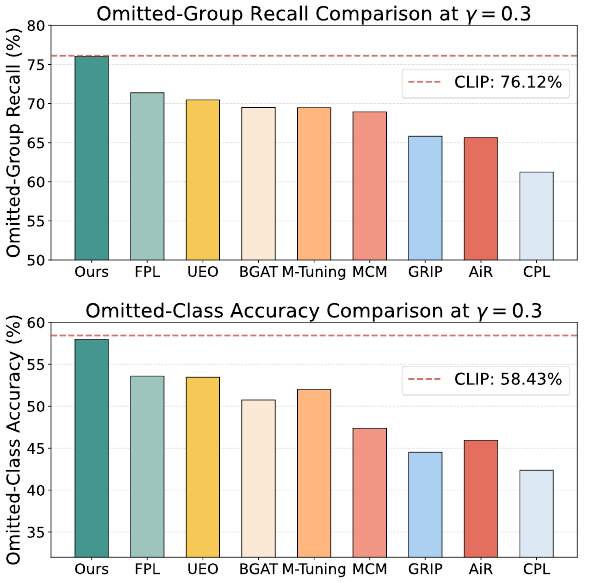}
		\caption{$\gamma=0.3$}
		\label{fig:0.3_avg}
	\end{subfigure}
	\hfill
	\begin{subfigure}[t]{0.32\textwidth}
		\centering
		\includegraphics[width=\linewidth]{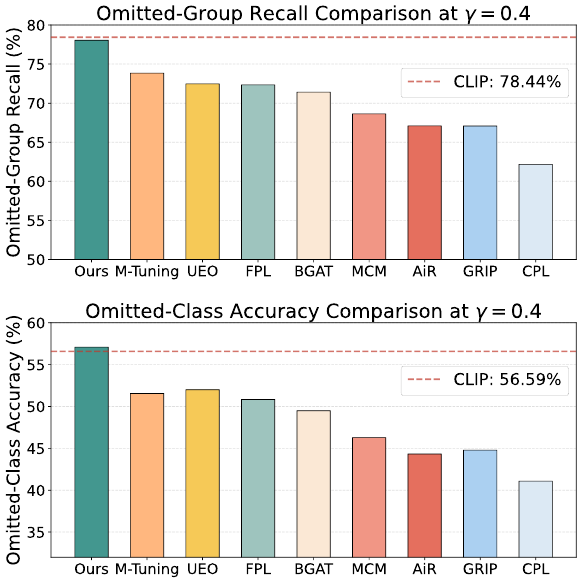}
		\caption{$\gamma=0.4$}
		\label{fig:0.4_avg}
	\end{subfigure}
	\caption{Average results of omitted-group recall (\%) and omitted-class accuracy (\%) across eight datasets at different omission ratios $\gamma$.}
	\vspace{-0.1cm}
	\label{fig:avg_omitted_results}
\end{figure*}

\textbf{The reconstructed latent label space contains semantic anchors derived from omitted-class images which captures the majority of omitted-class samples.}
As shown in Fig.~\ref{fig:semantic_anchors_ratio}, the proportion of anchors derived from omitted-class images increases with the omission ratio. Table~\ref{tab:semantic_anchors} further demonstrates that these anchors cover 65.00\%\textasciitilde{}91.66\% of omitted classes and capture 64.49\%\textasciitilde{}87.95\% of omitted-class samples. Together, these results support the ability of LSR to incorporate omitted-class semantics into the reconstructed latent label space. During prompt tuning, these anchors serve as prediction targets, and $\mathcal{L}_{\mathrm{R}}$ encourages discrimination among the corresponding latent groups. The effectiveness of our design is further supported by its consistent advantage over simple WordNet-based vocabulary expansion, as demonstrated in Appendix~\ref{app:wordnets}.

\begin{figure}[ht]
	\begin{minipage}[t]{0.47\textwidth}
		\centering
		\vspace{-0.2cm}
		\includegraphics[width=1.0\linewidth, height=0.63\linewidth]
		{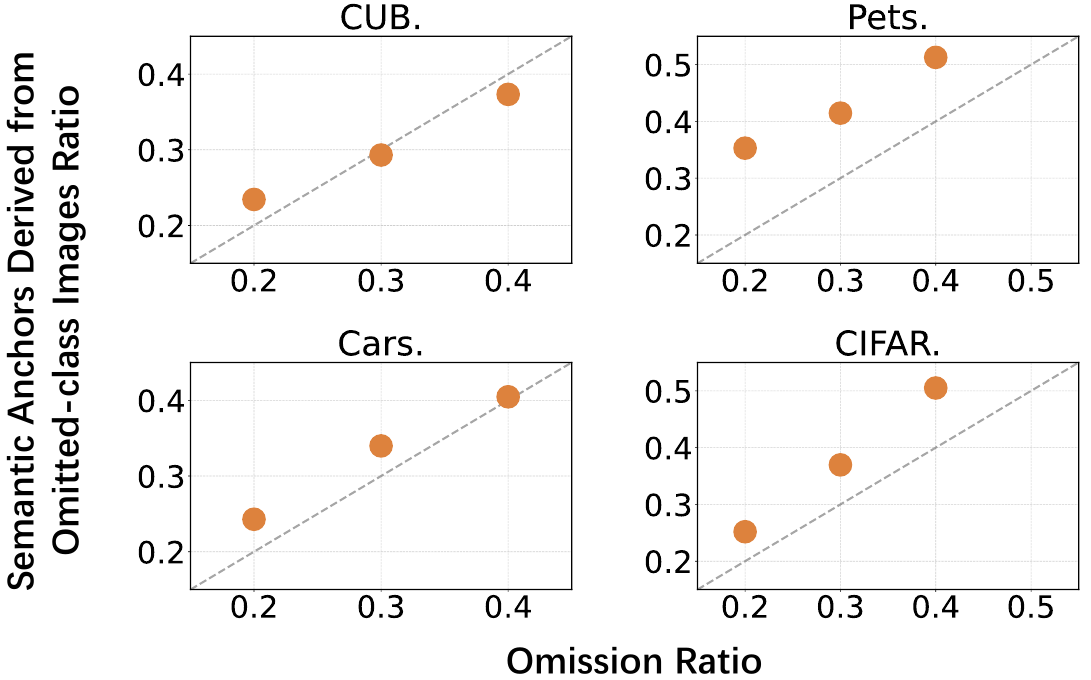}
		\caption{The proportion of semantic anchors derived from omitted-class images at different omission ratios.}
		\vspace{-0.2cm}
		\label{fig:semantic_anchors_ratio}
	\end{minipage}
	\hfill
	\begin{minipage}[t]{0.50\textwidth}
		\centering
		\captionof{table}{The omitted classes coverage ratio (\%) and omitted-class samples capture ratio (\%) of all semantic anchors derived from omitted-class images at different $\gamma$.}
		\label{tab:semantic_anchors}
		\small
		\renewcommand{\arraystretch}{1.1}
		\resizebox{1.0\linewidth}{!}{ 
			\begin{tabular}{cc|cccc}
				\toprule[1.1pt]
				Metrics & $\gamma$ & CUB. & Pets. & Cars. & CIFAR. \\
				\midrule
				\multirow{4}{*}{\shortstack[c]{Omitted classes \\ coverage ratio}}
				& 0.2 & 82.50 & 87.50 & 87.50 & 80.00 \\
				& 0.3 & 75.00 & 91.66 & 83.05 & 83.33 \\
				& 0.4 & 65.00 & 86.66 & 65.82 & 82.50 \\
				& Avg. & 74.17 & 88.61 & 78.79 & 81.94 \\
				\midrule
				\midrule
				\multirow{4}{*}{\shortstack[c]{Omitted-class \\ samples capture \\ ratio}}
				& 0.2 & 86.89 & 79.84 & 66.16 & 64.49 \\
				& 0.3 & 81.74 & 86.90 & 72.49 & 71.51 \\
				& 0.4 & 86.89 & 87.95 & 75.15 & 82.11 \\
				& Avg. & 85.17 & 84.90 & 71.27 & 72.70 \\
				\bottomrule[1.1pt]
			\end{tabular}}
	\end{minipage}
\end{figure}

\textbf{Evaluation with a larger image encoder.}
We evaluate LSR with a larger image encoder ViT-L/14 on five datasets. As shown in Table~\ref{tab:ViT_L_14}, LSR achieves the highest accuracy on all five datasets, exceeding the strongest baseline by an average of \textbf{1.89} percentage points. These results demonstrate that our method remains effective with a larger visual backbone.

\textbf{Evaluation with a stronger foundation model.}
We further evaluate LSR with PE-Core-B/16~\citep{PE-core} to assess its effectiveness on a stronger foundation model. Table~\ref{tab:PE_core} shows that LSR consistently achieves the highest accuracy across all five datasets, reaching an average of 73.96\%. Specifically, our method outperforms the strongest baseline on RES and DTD by \textbf{3.32} and \textbf{2.07} percentage points, respectively. These results illustrate the robustness of our method across different foundation models.

\begin{table}[h]
	\begin{minipage}{0.495\textwidth}
		\centering
		\caption{Comparison results of all-class accuracy (\%) with the image encoder ViT-L/14.}
		\label{tab:ViT_L_14}
		\renewcommand{\arraystretch}{1.2}
		\begin{adjustbox}{width=1\columnwidth, center}
			\begin{tabular}{l|ccccc}
				\toprule[1.1pt]
				Method & Flw. & FA. & DTD. & RES. & CIFAR.   \\
				\cmidrule(lr){1-6}
				CLIP & $73.94_{0.00}$ & $30.51_{0.00}$ & $52.76_{0.00}$ & $62.63_{0.00}$ & $75.93_{0.00}$  \\
				FPL & $76.35_{1.23}$ & $30.65_{0.42}$ & $52.62_{1.33}$ & $68.75_{0.19}$ & $79.34_{0.19}$  \\
				GRIP & $70.23_{0.60}$ & $26.28_{1.53}$ & $48.49_{0.67}$ & $71.90_{0.86}$ & $76.76_{0.49}$  \\
				CPL & $65.85_{5.82}$ & $28.52_{1.13}$ & $49.79_{0.88}$ & $70.07_{0.68}$ & $75.79_{0.15}$  \\
				AiR & $66.09_{1.05}$ & $26.07_{1.20}$ & $48.90_{3.09}$ & $69.05_{0.74}$ & $76.69_{0.17}$  \\
				MCM & $71.32_{2.65}$ & $27.18_{1.11}$ & $52.25_{0.94}$ & $62.84_{1.21}$ & $78.53_{0.32}$  \\
				UEO & $75.42_{0.34}$ & $27.86_{0.69}$ & $53.07_{0.32}$ & $68.24_{1.73}$ & $78.71_{0.36}$  \\
				M-T & $75.11_{1.01}$ & $30.16_{1.04}$ & $52.15_{0.35}$ & $68.64_{0.87}$ & $78.87_{0.23}$  \\
				BGAT & $71.92_{1.79}$ & $29.21_{0.83}$ & $52.59_{0.29}$ & $60.72_{0.78}$ & $78.49_{0.21}$  \\
				\cellcolor{Gray}Ours & 
				\cellcolor{Gray}$\textbf{76.68}_{0.64}$ & \cellcolor{Gray}$\textbf{33.53}_{0.25}$ & \cellcolor{Gray}$\textbf{54.63}_{0.14}$ & \cellcolor{Gray}$\textbf{72.49}_{0.69}$ & \cellcolor{Gray}$\textbf{79.80}_{0.07}$ \\
				\bottomrule[1.1pt]
			\end{tabular}
		\end{adjustbox}
	\end{minipage}
	\hspace{0.1cm}
	\begin{minipage}{0.495\textwidth}
		\centering
		\caption{Comparison results of all-class accuracy (\%) on the foundation model PE-Core-B/16.}
		\label{tab:PE_core}
		\renewcommand{\arraystretch}{1.2}
		\begin{adjustbox}{width=1\columnwidth, center}
			\begin{tabular}{l|ccccc}
				\toprule[1.1pt]
				Method & Flw. & FA. & DTD. & RES. & CIFAR.   \\
				\cmidrule(lr){1-6}
				CLIP & $84.40_{0.00}$ & $55.20_{0.00}$ & $65.69_{0.00}$ & $67.78_{0.00}$ & $82.47_{0.00}$  \\
				FPL & $85.82_{0.93}$ & $54.72_{1.15}$ & $66.60_{0.44}$ & $70.22_{0.29}$ & $82.86_{0.20}$  \\
				GRIP & $80.48_{0.71}$ & $44.96_{1.24}$ & $65.39_{1.46}$ & $68.98_{1.08}$ & $81.27_{0.60}$  \\
				CPL & $83.14_{0.72}$ & $44.04_{0.84}$ & $64.79_{1.35}$ & $68.72_{1.48}$ & $81.75_{0.28}$  \\
				AiR & $76.84_{2.36}$ & $40.79_{0.57}$ & $64.88_{1.12}$ & $67.00_{1.32}$ & $80.87_{0.44}$  \\
				MCM & $82.21_{1.10}$ & $40.79_{0.95}$ & $64.70_{1.24}$ & $64.05_{0.64}$ & $83.02_{0.02}$  \\
				UEO & $82.54_{1.09}$ & $46.19_{1.25}$ & $66.74_{1.01}$ & $71.14_{1.50}$ & $83.62_{0.19}$  \\
				M-T & $85.40_{0.25}$ & $53.84_{0.54}$ & $65.80_{0.55}$ & $69.25_{0.92}$ & $82.94_{0.18}$  \\
				BGAT & $79.67_{0.57}$ & $50.63_{0.23}$ & $63.88_{0.56}$ & $65.67_{0.99}$ & $82.13_{0.21}$  \\
				\cellcolor{Gray}Ours & 
				\cellcolor{Gray}$\textbf{86.37}_{0.39}$ & \cellcolor{Gray}$\textbf{56.48}_{1.14}$ & \cellcolor{Gray}$\textbf{68.81}_{0.43}$ & \cellcolor{Gray}$\textbf{74.46}_{0.92}$ & \cellcolor{Gray}$\textbf{83.68}_{0.29}$ \\
				\bottomrule[1.1pt]
			\end{tabular}
		\end{adjustbox}
	\end{minipage}
\end{table}

\textbf{Sensitivity analysis.} We evaluate the sensitivity of LSR to $\lambda$ (in Eq.~\ref{eq:loss_all}) and $\varepsilon$ (in Appendix \ref{app:pseudo_labeled_dataset_generation}) on CIFAR and Cars. As shown in Fig.~\ref{fig:sensitivity_analysis_lambda} and Fig.~\ref{fig:sensitivity_analysis_epsilon}, all-class accuracy remains   relatively stable as $\lambda$ varies from 0.2 to 5.0 and $\varepsilon$ varies from 0.2 to 0.4. These results indicate that LSR is robust to the choice of these hyperparameters. Sensitivity analyses on more datasets and parameters are provided in Appendix \ref{app:sensitivity_analysis}.

\textbf{Comparison under the conventional UVA setting.} As shown in Fig.~\ref{fig:UVA_Comparison}, LSR achieves the second-highest all-class accuracy under the conventional UVA setting. The adaptation vocabulary is complete, reducing the benefit of reconstructing a latent label space. Nevertheless, the reconstructed latent label space provides supplementary supervision derived from unlabeled images, which can complement vocabulary-based pseudo-labels and help LSR maintain competitive performance.

\begin{figure*}[ht]
	\centering
	\begin{subfigure}[t]{0.26\textwidth}
		\centering
		\includegraphics[width=\linewidth]{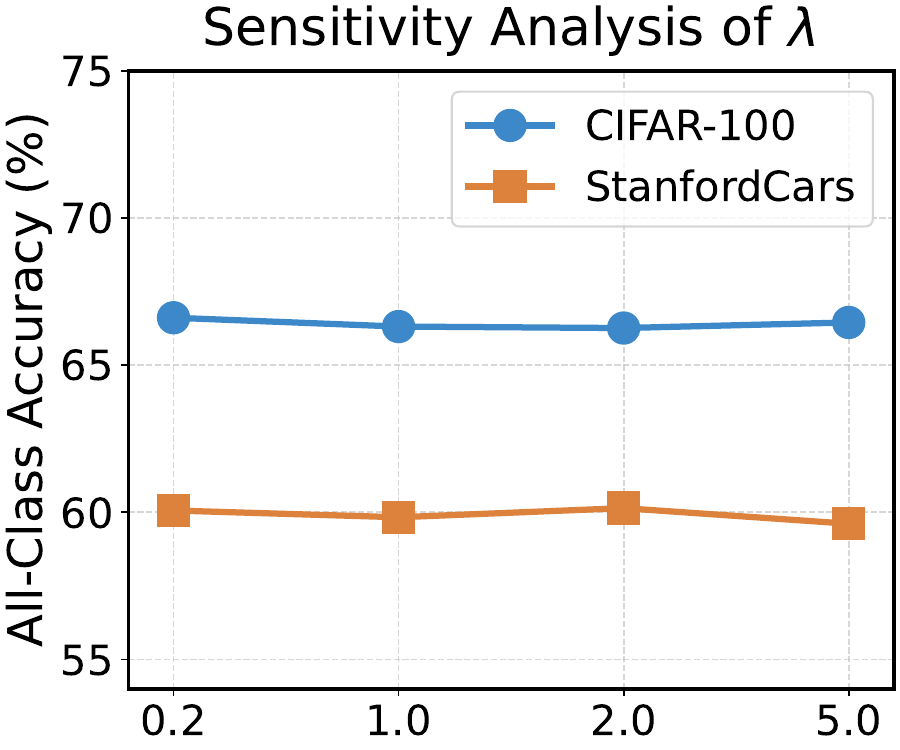}
		\vspace{-0.2cm}
		\caption{}
		\label{fig:sensitivity_analysis_lambda}
	\end{subfigure}
	\hfill
	\begin{subfigure}[t]{0.26\textwidth}
		\centering
		\includegraphics[width=\linewidth]{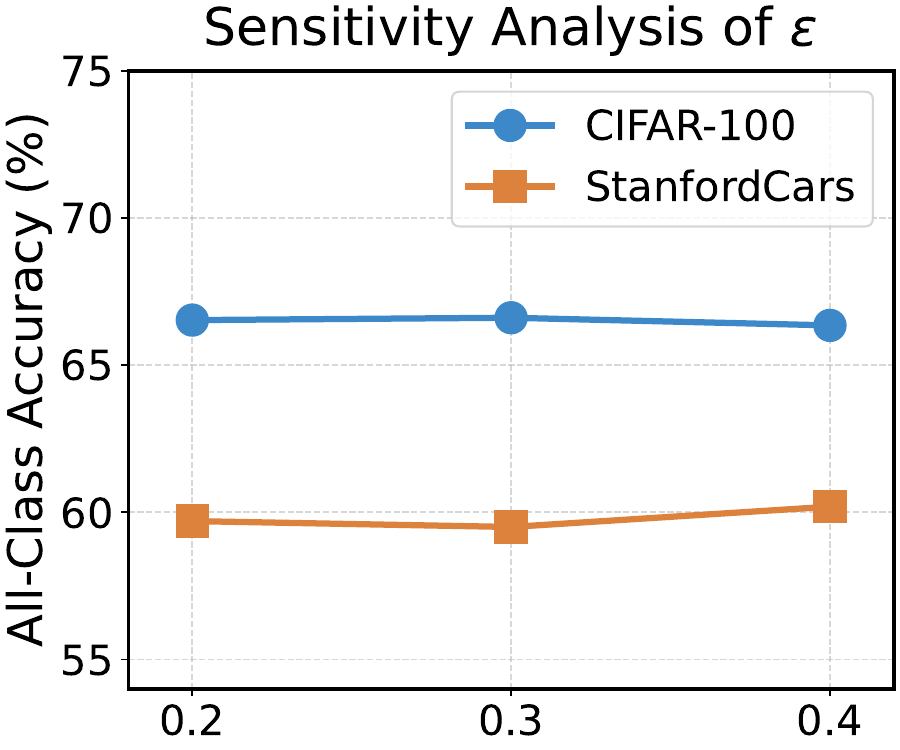}
		\vspace{-0.2cm}
		\caption{}
		\label{fig:sensitivity_analysis_epsilon}
	\end{subfigure}
	\hfill
	\begin{subfigure}[t]{0.44\textwidth}
		\centering
		\includegraphics[width=\linewidth]{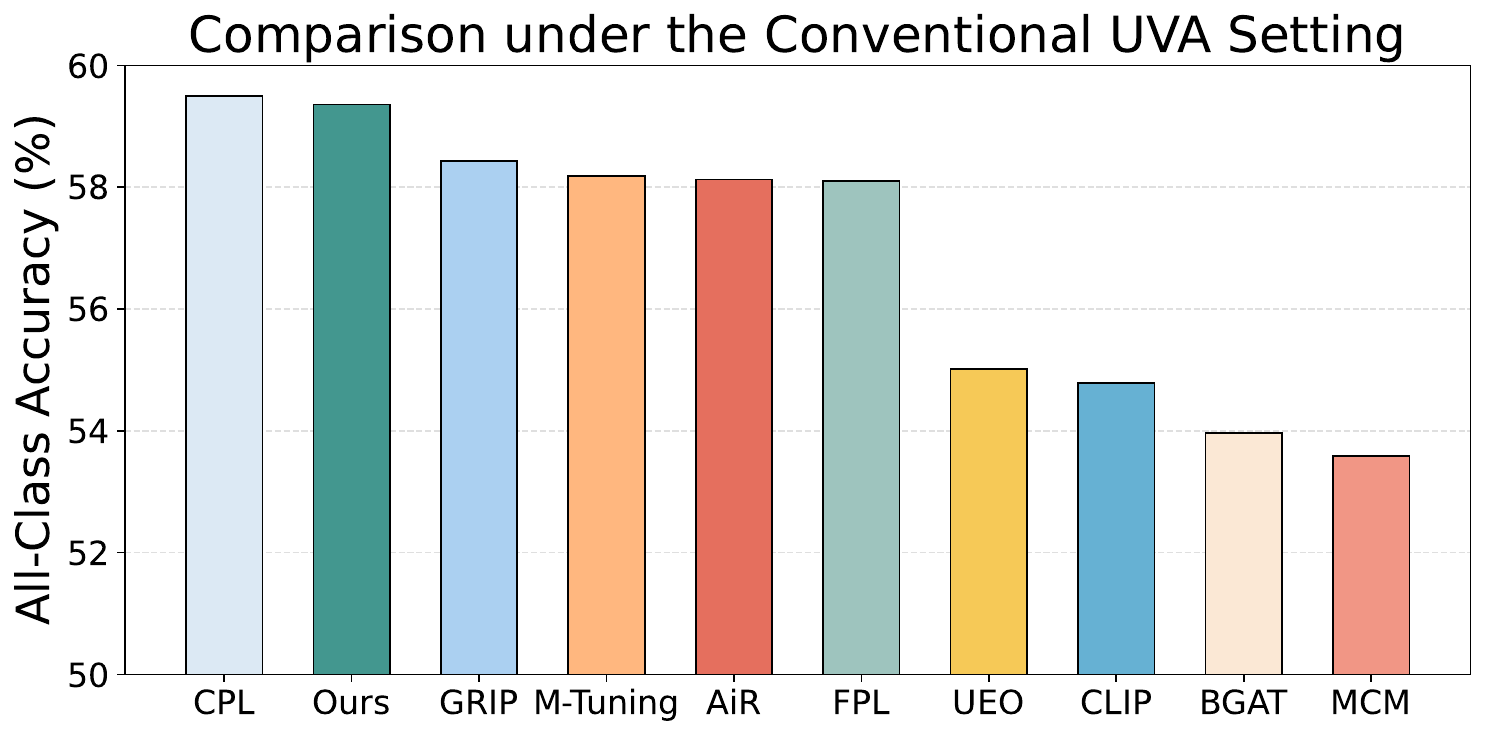}
		\vspace{-0.4cm}
		\caption{}
		\label{fig:UVA_Comparison}
	\end{subfigure}
	\vspace{-0.2cm}
	\caption{(a) Sensitivity analysis of $\lambda$. (b) Sensitivity analysis of $\varepsilon$. (c) Comparison under the conventional UVA setting.}
\end{figure*}

\section{Conclusion}
In this work, we propose vocabulary-incomplete unsupervised vision-language model adaptation (VI-UVA), a practical yet challenging setting where only a subset of class names is available during adaptation. We identify the adaptation bias induced by vocabulary incompleteness, which leads to vocabulary overfitting and degrades omitted-class recognition. To address this issue, we propose label space reconstruction (LSR), which reconstructs the latent semantic structures from unlabeled data and incorporates them into prompt tuning. By extending supervision beyond the adaptation vocabulary, LSR effectively mitigates vocabulary overfitting and preserves omitted-class recognition. We also derive an omitted-class semantic discrepancy bound for LSR and identify the key factors affecting the bound. Extensive experiments on eight benchmarks show that LSR consistently outperforms existing methods at different omission ratios and remains robust across different visual backbones and foundation models. 

\subsection*{AI use statement}

In this work, we used generative AI tools solely to assist with grammar, spelling, and sentence clarity. We have reviewed all AI-assisted work and bear full responsibility for all content generated by the LLM. The generative AI contributed in no other way to the paper. The research idea, experiment design, and all other content were developed and completed by  ourselves.

\subsection*{Ethics statement}

We confirm that our work on vocabulary-incomplete unsupervised vision-language model adaptation adheres to the ICLR Code of Ethics. The research utilizes publicly available datasets containing no personally identifiable information. As a strictly technical contribution focused on new learning paradigm and algorithmic design, we foresee no ethical issues or potential for misuse.

\subsection*{Reproducibility statement}

The complete code for implementing data preprocessing, model training, and evaluation is provided in the Supplementary Material to facilitate the reproduction of our results. Also, our experimental setup is described in Sec.~\ref{sec:experimental_settings} and Appendix \ref{app:implementation_details}.

\bibliography{iclr2027_conference}
\bibliographystyle{iclr2027_conference}

\newpage
\appendix

\section*{Table of Contents for the Appendix}
\begin{itemize}[leftmargin=25pt]
	\setlength{\itemsep}{10pt}
	\item ~\ref{app:comparison_with_related_learning_paradigms}: Related Learning Paradigms  \dotfill \pageref{app:comparison_with_related_learning_paradigms}
	
	\item ~\ref{app:overview_and_pseudo-code}: Overview and Pseudo-Code of LSR \dotfill \pageref{app:overview_and_pseudo-code}
	
	\item ~\ref{app:proof}: Proof and further analysis of Theorem \ref{thm1:semantic_discrepancy}   \dotfill \pageref{app:proof}
	
	\item ~\ref{app:datasets_and_evaluation}: Datasets and Evaluations \dotfill \pageref{app:datasets_and_evaluation}
	
	\item ~\ref{app:comparison_methods}: Comparison Methods \dotfill \pageref{app:comparison_methods}
	
	\item ~\ref{app:implementation}: Implementation \dotfill \pageref{app:implementation}
	
		\item ~~~~~~\ref{app:pseudo_labeled_dataset_generation}: Pseudo-Labeled Dataset Generation \dotfill \pageref{app:pseudo_labeled_dataset_generation}
	
		\item ~~~~~~\ref{app:filtering_procedure_of_unreliable_centers}: Filtering Procedure of Unreliable Anchors  \dotfill \pageref{app:filtering_procedure_of_unreliable_centers}
	
		\item ~~~~~~\ref{app:implementation_details}: Implementation Details \dotfill \pageref{app:implementation_details}
	
	\item ~\ref{app:more_experiment_results_and_analysis}: More Experiment Results and Analysis \dotfill \pageref{app:more_experiment_results_and_analysis}
		\item ~~~~~~\ref{app:additional_main_results}: More Main Results \dotfill \pageref{app:additional_main_results}
		
		\item ~~~~~~\ref{app:more_ablation_studies}:  More Ablation Studies \dotfill \pageref{app:more_ablation_studies}
		
		\item ~~~~~~\ref{app:wordnets}:  Comparison with WordNet-Based Vocabulary Expansion \dotfill \pageref{app:wordnets}
	
		\item ~~~~~~\ref{app:sensitivity_analysis}: Sensitivity Analysis \dotfill \pageref{app:sensitivity_analysis}
		
	\item ~\ref{app:compute_resources}: Compute Resources \dotfill \pageref{app:compute_resources}
	
\end{itemize}

\newpage

\section{Related Learning Paradigms}
\label{app:comparison_with_related_learning_paradigms}
In Table \ref{tab:paradigm_comparison}, we compare vocabulary-incomplete unsupervised VLM adaptation (VI-UVA) with open-set recognition (OSR), out-of-distribution detection (OOD), and generalized category discovery (GCD). VI-UVA differs from these related paradigms in terms of both adaptation data and evaluation objectives. During adaptation, VI-UVA requires no labeled samples, whereas OSR and GCD rely on labeled samples from known classes. During evaluation, VI-UVA performs classification over all classes. In contrast, OSR rejects all unknown-class samples as a single unknown category, OOD detection performs binary classification between ID and OOD samples, and GCD assigns known-class samples to known categories while grouping novel-class samples into unnamed clusters. 
\vspace{-0.2cm}
\begin{table*}[ht] 
	\centering 
	\caption{Comparison of VI-UVA with related learning paradigms.} 
	\vspace{-0.2cm}
	\label{tab:paradigm_comparison} 
	\small
	\setlength{\tabcolsep}{4pt} 
	\renewcommand{\arraystretch}{1.5} 
	\begin{tabularx}{\linewidth}{ 
			>{\raggedright\arraybackslash}p{0.08\linewidth} 
			>{\raggedright\arraybackslash}p{0.45\linewidth} 
			>{\raggedright\arraybackslash}X } 
		\toprule 
		Paradigm 
		& Adaptation data 
		& Evaluation objectives \\ 
		\midrule 
		
		OSR
		& Labeled data from known classes. 
		& Classify known samples and reject all unknown samples as a single unknown category. \\ 
		
		OOD
		& Labeled or unlabeled ID data.
		& Distinguish OOD samples from ID samples without recognizing their semantic categories. \\ 
		
		GCD
		& Labeled data from known classes and unlabeled data from both known and novel classes. 
		& Classify known classes and cluster novel classes. \\ 
		
		\textbf{VI-UVA} 
		& Unlabeled data from all classes and an incomplete adaptation vocabulary of class names. 
		& Classify all classes with the complete vocabulary. \\ 
		
		\bottomrule 
	\end{tabularx} 
\end{table*}
\vspace{-0.4cm}

\section{Overview and Pseudo-Code of LSR}
Fig.~\ref{fig:overview} presents the overview of LSR. The pseudo-code of LSR is shown in Algorithm \ref{alg:LSR}.
\label{app:overview_and_pseudo-code}
\begin{figure*}[ht]
	\centering
	\includegraphics[width=1.0\textwidth]{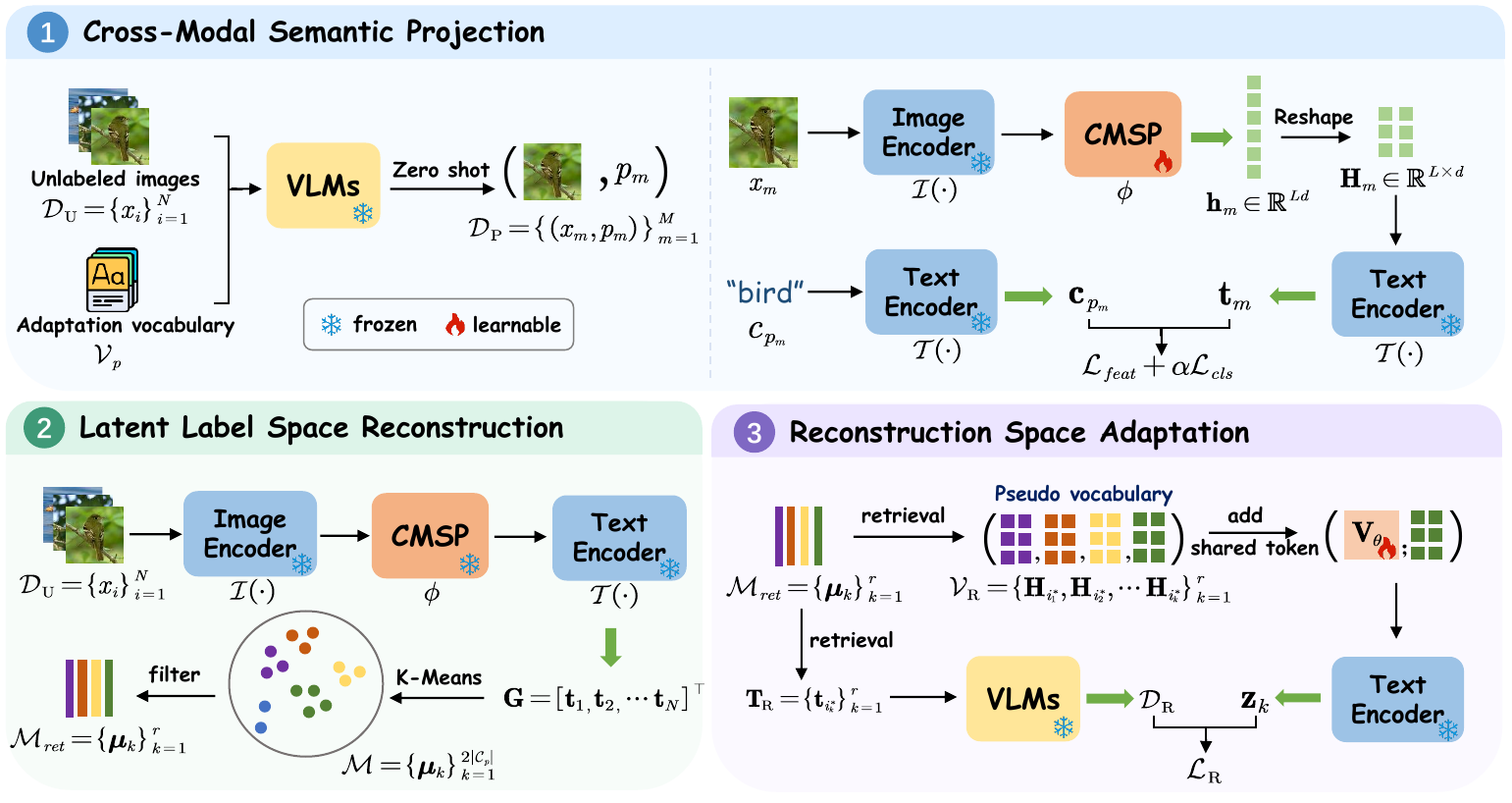}
	\caption{\textit{First}, we train a cross-modal semantic projector (CMSP) using a pseudo-labeled dataset derived from the adaptation vocabulary, mapping visual representations into the textual embedding space. \textit{Second}, we apply the CMSP to all unlabeled images to generate textual embeddings, cluster them into semantic anchors, and filter out unreliable anchors. \textit{Third}, for each retained anchor, we retrieve the corresponding pseudo-token sequence and textual representation to construct a pseudo-vocabulary and a pseudo-labeled dataset, respectively. Then, we perform prompt tuning with the pseudo-vocabulary and pseudo-labeled dataset to preserve omitted-class recognition.}
	\label{fig:overview}
\end{figure*}
\vspace{-0.2cm}

\begin{algorithm}[!ht]
	\caption{Training process of LSR}
	\label{alg:LSR}
	\begin{algorithmic}[1]
		\STATE \textbf{Input}: Unlabeled dataset $\mathcal{D}_{\mathrm{U}}$, adaptation vocabulary $\mathcal{V}_p$, pre-trained image encoder $\mathcal{I}$ and text encoder $\mathcal{T}$, training epochs $E_{\mathrm{C}}$ and $E_{\mathrm{P}}$.
		\STATE \textbf{Output}: Shared learnable context tokens $\mathbf{V}_{\theta}$.
		\STATE Freeze the parameters of $\mathcal{I}$ and $\mathcal{T}$.
		
		\STATE \textbf{Stage 1: Train the cross-modal semantic projector.}
		\STATE Extract visual embeddings of $\mathcal{D}_{\mathrm{U}}$ and textual embeddings of $\mathcal{V}_p$.
		\STATE Construct a balanced pseudo-labeled dataset $\mathcal{D}_{\mathrm{P}}$ with $\mathcal{V}_p$.
		\STATE Initialize the CMSP parameters $\phi$.
		\FOR{epoch $=1,\dots,E_{\mathrm{C}}$}
		\FOR{each mini-batch from $\mathcal{D}_{\mathrm{P}}$}
		\STATE Compute $\mathcal{L}_{\mathrm{CMSP}}=\mathcal{L}_{\mathrm{feat}}+\alpha\mathcal{L}_{\mathrm{cls}}$.
		\STATE Update only $\phi$ via gradient descent.
		\ENDFOR
		\ENDFOR
		
		\STATE \textbf{Stage 2: Reconstruct the latent label space.}
		\STATE Freeze the CMSP parameters $\phi$.
		\STATE Generate the pseudo-token sequence $\mathbf{H}_i$ and textual embedding $\mathbf{t}_i$ for each $x_i\in\mathcal{D}_{\mathrm{U}}$.
		\STATE Perform K-means clustering on $\{\mathbf{t}_i\}_{i=1}^{N}$ into $2|\mathcal{V}_p|$ anchors.
		\STATE Filter unreliable anchors based on their image support to obtain $\mathcal{M}_{\mathrm{ret}}$.
		
		\STATE \textbf{Stage 3: Optimize the shared context tokens.}
		\FOR{each retained anchor $\boldsymbol{\mu}_k\in\mathcal{M}_{\mathrm{ret}}$}
		\STATE Retrieve $i_k^*=\operatorname*{arg\,max}_{i\in\{1,\dots,N\}}\mathrm{sim}(\boldsymbol{\mu}_k,\mathbf{t}_i)$.
		\ENDFOR
		\STATE Construct $\mathcal{V}_{\mathrm{R}}=\{\mathbf{H}_{i_k^*}\}_{k=1}^{r}$ and $\mathbf{T}_{\mathrm{R}}=\{\mathbf{t}_{i_k^*}\}_{k=1}^{r}$.
		\STATE Construct a balanced pseudo-labeled dataset $\mathcal{D}_{\mathrm{R}}$ with $\mathbf{T}_{\mathrm{R}}$.
		\STATE Initialize the shared context tokens $\mathbf{V}_{\theta}$ and keep pseudo-vocabulary $\mathcal{V}_{\mathrm{R}}$ fixed.
		\FOR{epoch $=1,\dots,E_{\mathrm{P}}$}
		\FOR{each pair of mini-batches from $\mathcal{D}_{\mathrm{P}}$ and $\mathcal{D}_{\mathrm{R}}$}
		\STATE Construct prompt-conditioned textual prototypes using $\mathbf{V}_{\theta}$ for $\mathcal{V}_p$ and $\mathcal{V}_{\mathrm{R}}$.
		\STATE Compute $\mathcal{L}_{\mathrm{all}}=\lambda\mathcal{L}_{\mathrm{R}}+\mathcal{L}_{\mathrm{P}}$.
		\STATE Update only $\mathbf{V}_{\theta}$ via gradient descent.
		\ENDFOR
		\ENDFOR
	\end{algorithmic}
\end{algorithm}

\section{Proof and further analysis of Theorem \ref{thm1:semantic_discrepancy}}
\label{app:proof}
In this section, we prove Theorem~\ref{thm1:semantic_discrepancy}. Let $p=|\mathcal{C}_p|$ and $C=|\mathcal{C}_t|$, with
provided classes indexed by $\left\{1,\ldots,p\right\}$. Throughout the analysis, the textual prototypes are $\ell_2$-normalized, i.e., $\|\mathbf{c}_j\|_2=1$ for all $j\in\{1,\ldots,C\}$. As in Eq.~\ref{eq:projection},
\begin{equation}
	\mathbf{x}_i=\mathcal{I}(x_i),~~~~
	\mathbf{t}_i=\mathcal{F}(\mathbf{x}_i),~~~~
	\mathcal{F}=\mathcal{T}\circ\mathcal{R}_{L\times d}\circ f_\phi.
\end{equation}
Let $\mathcal{I}_o$ index the omitted-class samples in $\mathcal{D}_U$
and $N_o=|\mathcal{I}_o|$. For $i\in\mathcal{I}_o$, denote
$\mathbf{c}_{y_i}$ as the textual prototype of its true class.
Unsubscripted norms are Euclidean.

\begin{assumption}
	\label{ass:regularity}
	Suppose that $\mathcal{F}$ is $\ell$-Lipschitz and its Jacobian is $\kappa$-Lipschitz on a convex domain $\Omega$ containing the relevant visual
	representations. For finite $\ell,\kappa\geq0$ and all $\mathbf{u},\mathbf{v}\in\Omega$,
	\begin{equation}
		\begin{aligned}
			\|\mathcal{F}(\mathbf{u})-\mathcal{F}(\mathbf{v})\|
			&\leq\ell\|\mathbf{u}-\mathbf{v}\|,\\
			\|D\mathcal{F}(\mathbf{u})-D\mathcal{F}(\mathbf{v})\|_{\mathrm{op}}
			&\leq\kappa\|\mathbf{u}-\mathbf{v}\|,
		\end{aligned}
		\label{eq:regularity}
	\end{equation}
	where $\|\cdot\|_{\mathrm{op}}$ is the induced operator norm.
\end{assumption}

Define the fixed semantic response over $\mathcal{C}_t$ by
\begin{equation}
	q_j(\mathbf{a})=
	\frac{\exp(\mathbf{a}^{\top}\mathbf{c}_j)}
	{\sum_{k=1}^{C}\exp(\mathbf{a}^{\top}\mathbf{c}_k)},
	~~~~q(\mathbf{a})=(q_j(\mathbf{a}))_{j=1}^{C}.
	\label{eq:response}
\end{equation}
Keep $\mathbf{T}_R=\{\mathbf{t}_{i_k^*}\}_{k=1}^{r}$ as
constructed by the cosine retrieval in Sec.~\ref{sec:RSA}. For each
$i\in\mathcal{I}_o$, define
\begin{equation}
	\begin{aligned}
		\widehat{\mathbf{t}}_i
		\in\operatorname*{arg\,min}_{\mathbf{a}\in\mathbf{T}_R}
		\|\mathbf{t}_i-\mathbf{a}\|,~~~~
		d_i=
		\|\mathbf{t}_i-\widehat{\mathbf{t}}_i\|,~~~~
		\mathfrak{r}_o
		=\left(\frac{1}{N_o}\sum_{i\in\mathcal{I}_o}d_i^2\right)^{1/2}.
	\end{aligned}
	\label{eq:nearest_error}
\end{equation}
The discrepancy in Theorem~\ref{thm1:semantic_discrepancy} is defined as a KL divergence:
\begin{equation}
	\mathcal{E}^{\mathrm{rec}}_o
	=\frac{1}{N_o}\sum_{i\in\mathcal{I}_o}
	D_{\mathrm{KL}}\!\left(
	q(\mathbf{c}_{y_i})\,\middle\|\,q(\widehat{\mathbf{t}}_i)
	\right).
	\label{eq:discrepancy}
\end{equation}

\begin{lemma}
	\label{lem:smooth}
	Under Assumption~\ref{ass:regularity}, for any
	$\mathbf{u},\mathbf{v}\in\Omega$,
	\begin{equation}
		\|\mathcal{F}(\mathbf{u})-\mathcal{F}(\mathbf{v})
		-D\mathcal{F}(\mathbf{v})(\mathbf{u}-\mathbf{v})\|
		\leq\frac{\kappa}{2}\|\mathbf{u}-\mathbf{v}\|^2.
		\label{eq:taylor}
	\end{equation}
	Consequently, if $a_s\geq0$, $\sum_s a_s=1$, and
	$\overline{\mathbf{u}}=\sum_s a_s\mathbf{u}_s$ with
	$\mathbf{u}_s\in\Omega$, then
	\begin{equation}
		\left\|\mathcal{F}(\overline{\mathbf{u}})
		-\sum_s a_s\mathcal{F}(\mathbf{u}_s)\right\|
		\leq\frac{\kappa}{2}\sum_s a_s
		\|\mathbf{u}_s-\overline{\mathbf{u}}\|^2.
		\label{eq:convex_recomposition}
	\end{equation}
\end{lemma}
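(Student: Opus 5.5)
The plan is to prove the first inequality with the integral form of Taylor's theorem along a line segment, and then to get the convex recomposition bound by applying it at the barycenter and averaging.

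\textbf{Step 1 (Taylor bound).} Assumption~\ref{ass:regularity} presupposes that $D\mathcal{F}$ exists on $\Omega$. Since $D\mathcal{F}$ is $\kappa$-Lipschitz, it is continuous, so $\mathcal{F}$ is $C^1$ on $\Omega$. Fix $\mathbf{u},\mathbf{v}\in\Omega$. By convexity of $\Omega$, the segment $\mathbf{v}+s(\mathbf{u}-\mathbf{v})$, $s\in[0,1]$, lies in $\Omega$. The function $g(s)=\mathcal{F}(\mathbf{v}+s(\mathbf{u}-\mathbf{v}))$ is therefore $C^1$ with $g'(s)=D\mathcal{F}(\mathbf{v}+s(\mathbf{u}-\mathbf{v}))(\mathbf{u}-\mathbf{v})$. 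The fundamental theorem of calculus, applied componentwise, gives
\begin{equation*}
\mathcal{F}(\mathbf{u})-\mathcal{F}(\mathbf{v})-D\mathcal{F}(\mathbf{v})(\mathbf{u}-\mathbf{v})
=\int_0^1\bigl[D\mathcal{F}(\mathbf{v}+s(\mathbf{u}-\mathbf{v}))-D\mathcal{F}(\mathbf{v})\bigr](\mathbf{u}-\mathbf{v})\,ds .
\end{equation*}
Moving the Euclidean norm inside the integral, bounding the integrand by the operator norm times $\|\mathbf{u}-\mathbf{v}\|$, and using the $\kappa$-Lipschitz property of $D\mathcal{F}$, the integrand is at most $\kappa s\|\mathbf{u}-\mathbf{v}\|^2$. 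Since $\int_0^1 s\,ds=\tfrac12$, this yields Eq.~\ref{eq:taylor}.

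\textbf{Step 2 (convex recomposition).} I take the combination $\overline{\mathbf{u}}=\sum_s a_s\mathbf{u}_s$ to be finite, which is the case used in the paper. Then $\overline{\mathbf{u}}\in\Omega$ by convexity. Apply Step 1 with $\mathbf{v}=\overline{\mathbf{u}}$ and $\mathbf{u}=\mathbf{u}_s$, and write
\begin{equation*}
\mathcal{F}(\mathbf{u}_s)=\mathcal{F}(\overline{\mathbf{u}})+D\mathcal{F}(\overline{\mathbf{u}})(\mathbf{u}_s-\overline{\mathbf{u}})+\mathbf{e}_s,
\qquad \|\mathbf{e}_s\|\le\tfrac{\kappa}{2}\|\mathbf{u}_s-\overline{\mathbf{u}}\|^2 .
\end{equation*}
Multiply by $a_s$ and sum over $s$. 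Because $\sum_s a_s=1$, the constant term sums to $\mathcal{F}(\overline{\mathbf{u}})$. By linearity of $D\mathcal{F}(\overline{\mathbf{u}})$, the linear term becomes $D\mathcal{F}(\overline{\mathbf{u}})\bigl(\sum_s a_s\mathbf{u}_s-\overline{\mathbf{u}}\bigr)=0$. Hence
\begin{equation*}
\mathcal{F}(\overline{\mathbf{u}})-\sum_s a_s\mathcal{F}(\mathbf{u}_s)=-\sum_s a_s\mathbf{e}_s .
\end{equation*}
The triangle inequality together with $a_s\ge0$ then gives Eq.~\ref{eq:convex_recomposition}.

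\textbf{Main obstacle.} There is no real obstacle. The only points needing care are the regularity bookkeeping: all segments and the barycenter must remain in $\Omega$, which convexity ensures, and $\mathcal{F}$ must be $C^1$ so that the integral representation is valid, which the Lipschitz Jacobian ensures. The Lipschitz constant $\ell$ is not needed for this lemma; it will only enter later, when the lemma is combined into the bound of Theorem~\ref{thm1:semantic_discrepancy}.
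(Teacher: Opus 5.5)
Your proposal is correct and follows the paper's proof essentially step for step. You bound the integral form of the first-order remainder along the segment using the $\kappa$-Lipschitz Jacobian, then apply the bound to each pair $(\mathbf{u}_s,\overline{\mathbf{u}})$, cancel the linear terms via $\sum_s a_s(\mathbf{u}_s-\overline{\mathbf{u}})=0$, and finish with the triangle inequality; your extra remarks ($\overline{\mathbf{u}}\in\Omega$ by convexity, and $\mathcal{F}$ being $C^1$ so the integral representation is valid) only make explicit what the paper uses implicitly.
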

\begin{proof}
	Set $\mathbf{h}=\mathbf{u}-\mathbf{v}$. Convexity keeps the segment
	$\mathbf{v}+t\mathbf{h}$ inside $\Omega$. The fundamental theorem of
	calculus gives
	\begin{equation}
		\mathcal{F}(\mathbf{u})-\mathcal{F}(\mathbf{v})
		-D\mathcal{F}(\mathbf{v})\mathbf{h}=\int_0^1
		\bigl[D\mathcal{F}(\mathbf{v}+t\mathbf{h})
		-D\mathcal{F}(\mathbf{v})\bigr]\mathbf{h}\,dt.
	\end{equation}
	Its norm is at most
	$\int_0^1\kappa t\|\mathbf{h}\|^2dt
	=\kappa\|\mathbf{h}\|^2/2$, proving
	Eq.~\ref{eq:taylor}. Apply this inequality to each pair
	$(\mathbf{u}_s,\overline{\mathbf{u}})$ and take the weighted sum.
	The linear terms cancel because
	$\sum_s a_s(\mathbf{u}_s-\overline{\mathbf{u}})=0$.
	The triangle inequality gives Eq.~\ref{eq:convex_recomposition}.
\end{proof}

For $\mathcal{D}_P=\{(x_m,p_m)\}_{m=1}^{M}$, define
$\mathcal{P}_j=\{m:p_m=j\}$, $n_j=|\mathcal{P}_j|$, and
\begin{equation}
	\begin{aligned}
		\overline{\mathbf{x}}_j
		&=\frac{1}{n_j}\sum_{m\in\mathcal{P}_j}\mathbf{x}_m,\\
		e_j&=\frac{1}{n_j}\sum_{m\in\mathcal{P}_j}
		\|\mathbf{t}_m-\mathbf{c}_j\|_1,\\
		v_j&=\frac{1}{n_j}\sum_{m\in\mathcal{P}_j}
		\|\mathbf{x}_m-\overline{\mathbf{x}}_j\|^2,\\
		\varepsilon_{\mathrm{proj}}&=\max_{1\leq j\leq p}\left(e_j+\frac{\kappa}{2}v_j\right).
	\end{aligned}
	\label{eq:projection_error}
\end{equation}

\begin{lemma}
	\label{lem:projection_error}
	Under Assumption~\ref{ass:regularity}, for every $j\in\left\{1,\ldots,p\right\}$,
	\begin{equation}
		\|\mathcal{F}(\overline{\mathbf{x}}_j)-\mathbf{c}_j\|
		\leq e_j+\frac{\kappa}{2}v_j
		\leq\varepsilon_{\mathrm{proj}}.
	\end{equation}
\end{lemma}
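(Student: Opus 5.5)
Apply Lemma~\ref{lem:smooth} (Eq.~\ref{eq:convex_recomposition}) to the family $\{\mathbf{x}_m\}_{m\in\mathcal{P}_j}$ with uniform weights $a_m=1/n_j$, so that $\overline{\mathbf{u}}=\overline{\mathbf{x}}_j$. This requires all $\mathbf{x}_m$, and hence their mean, to lie in the convex domain $\Omega$, which is what Assumption~\ref{ass:regularity} provides. Since $\mathbf{t}_m=\mathcal{F}(\mathbf{x}_m)$, the lemma gives
\begin{equation*}
\Bigl\|\mathcal{F}(\overline{\mathbf{x}}_j)-\frac{1}{n_j}\sum_{m\in\mathcal{P}_j}\mathbf{t}_m\Bigr\|\leq\frac{\kappa}{2}\cdot\frac{1}{n_j}\sum_{m\in\mathcal{P}_j}\|\mathbf{x}_m-\overline{\mathbf{x}}_j\|^2=\frac{\kappa}{2}v_j .
\end{equation*}

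Next, split $\mathcal{F}(\overline{\mathbf{x}}_j)-\mathbf{c}_j$ by inserting the empirical mean $\overline{\mathbf{t}}_j=\frac{1}{n_j}\sum_{m\in\mathcal{P}_j}\mathbf{t}_m$. The triangle inequality yields
\begin{equation*}
\|\mathcal{F}(\overline{\mathbf{x}}_j)-\mathbf{c}_j\|\leq\|\mathcal{F}(\overline{\mathbf{x}}_j)-\overline{\mathbf{t}}_j\|+\|\overline{\mathbf{t}}_j-\mathbf{c}_j\| .
\end{equation*}
The first term is bounded by $\frac{\kappa}{2}v_j$ from the previous step. For the second term, write $\overline{\mathbf{t}}_j-\mathbf{c}_j=\frac{1}{n_j}\sum_m(\mathbf{t}_m-\mathbf{c}_j)$ and use convexity of the norm (Jensen) to get $\|\overline{\mathbf{t}}_j-\mathbf{c}_j\|\leq\frac{1}{n_j}\sum_m\|\mathbf{t}_m-\mathbf{c}_j\|$. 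Then apply the elementary inequality $\|\cdot\|_2\leq\|\cdot\|_1$ termwise, which bounds this by $e_j$.

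Combining the two bounds gives $\|\mathcal{F}(\overline{\mathbf{x}}_j)-\mathbf{c}_j\|\leq e_j+\frac{\kappa}{2}v_j$. The second inequality in the statement holds because $\varepsilon_{\mathrm{proj}}$ is defined as the maximum of $e_j+\frac{\kappa}{2}v_j$ over $j\in\{1,\ldots,p\}$.

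The only delicate point is the mismatch between the Euclidean norm on the left and the $\ell_1$ quantity in $e_j$. This is resolved by the direction $\|\cdot\|_2\leq\|\cdot\|_1$, so no dimension factor appears. The remaining step is simply to check that the convex-hull requirement of Lemma~\ref{lem:smooth} is met by the membership of the $\mathbf{x}_m$ in $\Omega$.
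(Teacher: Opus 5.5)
Your proposal is correct and follows the paper's proof essentially step for step. You apply Lemma~\ref{lem:smooth} with uniform weights $1/n_j$ to get the $\frac{\kappa}{2}v_j$ term, insert the mean generated representation via the triangle inequality, use $\|\cdot\|_2\leq\|\cdot\|_1$ to reach $e_j$, and read off the final inequality from the definition of $\varepsilon_{\mathrm{proj}}$ as a maximum.
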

\begin{proof}
	Apply Lemma~\ref{lem:smooth} to the representations in
	$\mathcal{P}_j$ with weights $1/n_j$. Inserting their mean generated
	representation gives
	\begin{equation}
		\begin{aligned}
			\|\mathcal{F}(\overline{\mathbf{x}}_j)-\mathbf{c}_j\|
			&\leq\left\|\mathcal{F}(\overline{\mathbf{x}}_j)
			-\frac{1}{n_j}\sum_{m\in\mathcal{P}_j}\mathbf{t}_m\right\|
			+\left\|\frac{1}{n_j}\sum_{m\in\mathcal{P}_j}
			(\mathbf{t}_m-\mathbf{c}_j)\right\|\\
			&\leq\frac{\kappa}{2}v_j
			+\frac{1}{n_j}\sum_{m\in\mathcal{P}_j}
			\|\mathbf{t}_m-\mathbf{c}_j\|_1\\
			&=\frac{\kappa}{2}v_j+e_j,
		\end{aligned}
	\end{equation}
	where $\|\cdot\|_2\leq\|\cdot\|_1$ gives the second line.
	Taking the maximum over groups proves the lemma.
\end{proof}

Let $\Delta_p=\{\mathbf{w}\in\mathbb{R}^{p}:w_j\geq0,
\sum_{j=1}^{p}w_j=1\}$. For the same $\mathbf{w}\in\Delta_p$, set
\[
\mathbf{x}_{\mathbf{w}}=\sum_{j=1}^{p}w_j\overline{\mathbf{x}}_j,
~~~~~
\mathbf{c}_{\mathbf{w}}=\sum_{j=1}^{p}w_j\mathbf{c}_j.
\]
Define the shared-recomposition discrepancy by
\begin{equation}
	\begin{aligned}
		b_i&=\min_{\mathbf{w}\in\Delta_p}\left\{
		\ell\|\mathbf{x}_i-\mathbf{x}_{\mathbf{w}}\|
		+\|\mathbf{c}_{y_i}-\mathbf{c}_{\mathbf{w}}\|
		+\frac{\kappa}{2}\sum_{j=1}^{p}w_j
		\|\overline{\mathbf{x}}_j-\mathbf{x}_{\mathbf{w}}\|^2\right\},\\
		\mathfrak{b}_o&=\left(\frac{1}{N_o}
		\sum_{i\in\mathcal{I}_o}b_i^2\right)^{1/2}.
	\end{aligned}
	\label{eq:shared_error}
\end{equation}
The three terms of $b_i$ are the visual residual, textual residual, and nonlinear
recomposition remainder. They are evaluated with common weights $\mathbf{w}$, which
may vary across samples. The minimum exists by continuity on the compact
simplex.

\begin{lemma}
	\label{lem:transfer_to_omitted}
	Under Assumption~\ref{ass:regularity}, every $i\in\mathcal{I}_o$ satisfies
	\begin{equation}
		\|\mathbf{t}_i-\mathbf{c}_{y_i}\|
		\leq\varepsilon_{\mathrm{proj}}+b_i.
	\end{equation}
\end{lemma}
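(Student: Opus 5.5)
The idea is to fix an omitted sample $i\in\mathcal{I}_o$, insert the recomposed quantities $\mathcal{F}(\mathbf{x}_{\mathbf{w}})$ and $\sum_j w_j\mathcal{F}(\overline{\mathbf{x}}_j)$ between $\mathbf{t}_i$ and $\mathbf{c}_{y_i}$, and bound each gap with Assumption~\ref{ass:regularity}, Lemma~\ref{lem:smooth} and Lemma~\ref{lem:projection_error}. We take $\mathbf{w}\in\Delta_p$ to be a minimizer in the definition of $b_i$ in Eq.~\ref{eq:shared_error}, which exists by compactness. We also assume, as the statement implicitly does, that $\mathbf{x}_i$, the $\overline{\mathbf{x}}_j$, and hence $\mathbf{x}_{\mathbf{w}}$ (by convexity) all lie in $\Omega$. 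The triangle inequality then gives
\begin{equation*}
\|\mathbf{t}_i-\mathbf{c}_{y_i}\|\leq\|\mathcal{F}(\mathbf{x}_i)-\mathcal{F}(\mathbf{x}_{\mathbf{w}})\|+\Bigl\|\mathcal{F}(\mathbf{x}_{\mathbf{w}})-\sum_{j}w_j\mathcal{F}(\overline{\mathbf{x}}_j)\Bigr\|+\Bigl\|\sum_j w_j\bigl(\mathcal{F}(\overline{\mathbf{x}}_j)-\mathbf{c}_j\bigr)\Bigr\|+\|\mathbf{c}_{\mathbf{w}}-\mathbf{c}_{y_i}\|.
\end{equation*}
We bound the four terms in order.
\begin{itemize}
\item The first term is at most $\ell\|\mathbf{x}_i-\mathbf{x}_{\mathbf{w}}\|$ by the $\ell$-Lipschitz property of $\mathcal{F}$.
\item The second term is at most $\frac{\kappa}{2}\sum_j w_j\|\overline{\mathbf{x}}_j-\mathbf{x}_{\mathbf{w}}\|^2$, by Eq.~\ref{eq:convex_recomposition} of Lemma~\ref{lem:smooth} applied with $\mathbf{u}_j=\overline{\mathbf{x}}_j$ and weights $w_j$.
\item The third term is at most $\sum_j w_j\|\mathcal{F}(\overline{\mathbf{x}}_j)-\mathbf{c}_j\|\leq\varepsilon_{\mathrm{proj}}$, by the triangle inequality, Lemma~\ref{lem:projection_error}, and $\sum_j w_j=1$.
\item The fourth term is exactly the textual residual in $b_i$.
\end{itemize}
The first, second and fourth terms together equal $b_i$, so this proves Lemma~\ref{lem:transfer_to_omitted}.

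For the theorem, we first pass from the generated embedding to the retrieved one. Since $\widehat{\mathbf{t}}_i$ is a nearest point in $\mathbf{T}_R$ to $\mathbf{t}_i$, the triangle inequality gives
\begin{equation*}
\|\widehat{\mathbf{t}}_i-\mathbf{c}_{y_i}\|\leq\varepsilon_{\mathrm{proj}}+b_i+d_i.
\end{equation*}

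Next we convert this distance bound into a bound on the KL term. Write $q=\mathrm{softmax}(M\mathbf{a})$, where the rows of $M$ are the unit prototypes $\mathbf{c}_k$. Set $A(\mathbf{a})=\log\sum_k\exp(\mathbf{a}^\top\mathbf{c}_k)$. The KL divergence between softmax responses is then the Bregman divergence of $A$:
\begin{equation*}
D_{\mathrm{KL}}\bigl(q(\mathbf{c})\,\|\,q(\mathbf{a})\bigr)=A(\mathbf{a})-A(\mathbf{c})-\nabla A(\mathbf{c})^\top(\mathbf{a}-\mathbf{c}).
\end{equation*}
Its Hessian is $\nabla^2A=M^\top(\mathrm{diag}(q)-qq^\top)M$, which is a covariance of the vectors $\mathbf{c}_k$ under the weights $q$. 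Because every $\|\mathbf{c}_k\|=1$, its operator norm is at most $\max_k\|\mathbf{c}_k\|^2=1$. Hence $A$ is $1$-smooth, and
\begin{equation*}
D_{\mathrm{KL}}\bigl(q(\mathbf{c}_{y_i})\,\|\,q(\widehat{\mathbf{t}}_i)\bigr)\leq\tfrac12\|\widehat{\mathbf{t}}_i-\mathbf{c}_{y_i}\|^2\leq\tfrac12\bigl(\varepsilon_{\mathrm{proj}}+b_i+d_i\bigr)^2.
\end{equation*}

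Finally we average over $i\in\mathcal{I}_o$. Regard $(\varepsilon_{\mathrm{proj}}+b_i+d_i)_i$ as a vector in $L^2$ with respect to the uniform measure on $\mathcal{I}_o$, and apply Minkowski's inequality:
\begin{equation*}
\Bigl(\frac{1}{N_o}\sum_{i\in\mathcal{I}_o}(\varepsilon_{\mathrm{proj}}+b_i+d_i)^2\Bigr)^{1/2}\leq\varepsilon_{\mathrm{proj}}+\mathfrak{b}_o+\mathfrak{r}_o.
\end{equation*}
Squaring and multiplying by $\tfrac12$ yields Eq.~\ref{eq:main_theoretical_bound}.

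The main obstacle is the KL-to-distance step. We need the smoothness constant of the log-partition function to be exactly $1$, which rests on the normalization $\|\mathbf{c}_j\|=1$ and on the variance bound $\mathrm{Var}_q\bigl(\mathbf{c}^\top\mathbf{u}\bigr)\leq\max_k(\mathbf{c}_k^\top\mathbf{u})^2\leq\|\mathbf{u}\|^2$. We also need the KL orientation in Eq.~\ref{eq:discrepancy} to match the Bregman form: the Bregman divergence must be expanded at $\mathbf{c}_{y_i}$, which works because the first argument of the KL is $q(\mathbf{c}_{y_i})$. Everything else is triangle inequalities.
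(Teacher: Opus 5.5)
Your proof is correct and essentially the paper's own: the same four-term telescoping through $\mathcal{F}(\mathbf{x}_{\mathbf{w}})$, $\sum_j w_j\mathcal{F}(\overline{\mathbf{x}}_j)$ and $\mathbf{c}_{\mathbf{w}}$, with the terms bounded by the Lipschitz property, Lemma~\ref{lem:smooth} and Lemma~\ref{lem:projection_error}. The only difference is cosmetic: you fix a minimizing $\mathbf{w}$ up front, whereas the paper bounds for arbitrary $\mathbf{w}\in\Delta_p$ and then minimizes; your $\Omega$-membership caveat is automatic by convexity, and your extra derivation of Theorem~\ref{thm1:semantic_discrepancy} also matches the paper.
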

\begin{proof}
	For any $\mathbf{w}\in\Delta_p$, apply the triangle inequality and insert
	$\mathcal{F}(\mathbf{x}_{\mathbf{w}})$,
	$\sum_jw_j\mathcal{F}(\overline{\mathbf{x}}_j)$, and
	$\mathbf{c}_{\mathbf{w}}$ to obtain
	\begin{equation}
		\begin{aligned}
			\|\mathbf{t}_i-\mathbf{c}_{y_i}\|
			&\leq\|\mathcal{F}(\mathbf{x}_i)
			-\mathcal{F}(\mathbf{x}_{\mathbf{w}})\|\\
			&\quad+\left\|\mathcal{F}(\mathbf{x}_{\mathbf{w}})
			-\sum_jw_j\mathcal{F}(\overline{\mathbf{x}}_j)\right\|\\
			&\quad+\left\|\sum_jw_j
			\bigl(\mathcal{F}(\overline{\mathbf{x}}_j)-\mathbf{c}_j\bigr)\right\|
			+\|\mathbf{c}_{\mathbf{w}}-\mathbf{c}_{y_i}\|.
		\end{aligned}
	\end{equation}
	Assumption \ref{ass:regularity} bounds the first term by
	$\ell\|\mathbf{x}_i-\mathbf{x}_{\mathbf{w}}\|$,
	Lemma~\ref{lem:smooth} bounds the second by
	$\frac{\kappa}{2}\sum_jw_j
	\|\overline{\mathbf{x}}_j-\mathbf{x}_{\mathbf{w}}\|^2$, and
	Lemma~\ref{lem:projection_error} bounds the third by
	$\sum_jw_j\varepsilon_{\mathrm{proj}}=\varepsilon_{\mathrm{proj}}$
	which proves the lemma.
\end{proof}

\begin{lemma}
	\label{lem:kl_bound}
	For $\mathbf{a},\mathbf{b}\in\mathbb{R}^{d}$,
	\begin{equation}
		D_{\mathrm{KL}}\!\left(q(\mathbf{a})\,\middle\|\,q(\mathbf{b})\right)
		\leq\frac{1}{2}\|\mathbf{a}-\mathbf{b}\|^2.
		\label{eq:kl_bound}
	\end{equation}
\end{lemma}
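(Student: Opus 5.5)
The plan is to write the KL divergence between the two softmax responses as a Bregman divergence of a log-sum-exp potential, and then bound its curvature along the segment from $\mathbf{a}$ to $\mathbf{b}$. The only property of the prototypes I will use is $\|\mathbf{c}_j\|_2=1$.

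\textbf{Step 1 (Bregman form).} Define $A(\mathbf{x})=\log\sum_{k=1}^{C}\exp(\mathbf{x}^{\top}\mathbf{c}_k)$ on $\mathbb{R}^d$, so that $\log q_j(\mathbf{x})=\mathbf{x}^{\top}\mathbf{c}_j-A(\mathbf{x})$ and $\nabla A(\mathbf{x})=\sum_j q_j(\mathbf{x})\mathbf{c}_j$. Substituting into
\[
D_{\mathrm{KL}}\bigl(q(\mathbf{a})\,\|\,q(\mathbf{b})\bigr)=\sum_j q_j(\mathbf{a})\bigl[(\mathbf{a}-\mathbf{b})^{\top}\mathbf{c}_j - A(\mathbf{a})+A(\mathbf{b})\bigr]
\]
gives the identity
\[
D_{\mathrm{KL}}\bigl(q(\mathbf{a})\,\|\,q(\mathbf{b})\bigr)=A(\mathbf{b})-A(\mathbf{a})-\nabla A(\mathbf{a})^{\top}(\mathbf{b}-\mathbf{a}).
\]

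\textbf{Step 2 (Taylor remainder).} Set $\mathbf{h}=\mathbf{b}-\mathbf{a}$. Since $A$ is smooth, Taylor's theorem with integral remainder gives
\[
D_{\mathrm{KL}}=\int_0^1(1-t)\,\mathbf{h}^{\top}\nabla^2A(\mathbf{a}+t\mathbf{h})\,\mathbf{h}\,dt .
\]

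\textbf{Step 3 (curvature bound).} This is the step where the normalization enters, and the one I expect to need the most care. A direct computation gives
\[
\nabla^2 A(\mathbf{x})=\sum_j q_j(\mathbf{x})\mathbf{c}_j\mathbf{c}_j^{\top}-\Bigl(\sum_j q_j(\mathbf{x})\mathbf{c}_j\Bigr)\Bigl(\sum_j q_j(\mathbf{x})\mathbf{c}_j\Bigr)^{\top}.
\]
Hence $\mathbf{h}^{\top}\nabla^2A(\mathbf{x})\mathbf{h}$ is the variance of the scalar $\mathbf{c}_J^{\top}\mathbf{h}$ when $J\sim q(\mathbf{x})$. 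A variance is at most the second moment, and Cauchy–Schwarz with $\|\mathbf{c}_j\|=1$ gives
\[
\mathbf{h}^{\top}\nabla^2A(\mathbf{x})\mathbf{h}\le\sum_j q_j(\mathbf{x})(\mathbf{c}_j^{\top}\mathbf{h})^2\le\|\mathbf{h}\|^2
\]
uniformly in $\mathbf{x}$.

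\textbf{Step 4 (conclude).} Plugging the bound from Step 3 into the integral of Step 2 and using $\int_0^1(1-t)\,dt=\tfrac12$ yields $D_{\mathrm{KL}}\le\tfrac12\|\mathbf{a}-\mathbf{b}\|^2$, which is Eq.~\ref{eq:kl_bound}.

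The rest is routine. One must check the Bregman identity in Step 1 carefully; the $A(\mathbf{a})$ and $A(\mathbf{b})$ terms come out correctly because $\sum_j q_j(\mathbf{a})=1$. No convexity of any domain is needed, since $A$ is smooth on all of $\mathbb{R}^d$.
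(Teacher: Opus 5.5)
Your proposal is correct and follows essentially the same route as the paper: it writes the KL divergence as the Bregman divergence of the log-sum-exp potential, expands it with Taylor's integral remainder, and bounds the Hessian (a covariance matrix) by the second moment using $\|\mathbf{c}_j\|=1$. The factor $\int_0^1(1-t)\,dt=\tfrac12$ then yields the stated bound exactly as in the paper's proof.
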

\begin{proof}
	Let $\Psi(\mathbf{a})=\log\sum_{j=1}^{C}
	\exp(\mathbf{a}^{\top}\mathbf{c}_j)$. Differentiation gives
	\begin{equation}
		\nabla\Psi(\mathbf{a})=\sum_jq_j(\mathbf{a})\mathbf{c}_j,~~~~
		\nabla^2\Psi(\mathbf{a})
		=\sum_jq_j(\mathbf{a})\mathbf{c}_j\mathbf{c}_j^{\top}
		-\nabla\Psi(\mathbf{a})\nabla\Psi(\mathbf{a})^{\top}.
	\end{equation}
	The Hessian is a covariance matrix. Hence, for any $\mathbf{h}$,
	\begin{equation}
		0\leq\mathbf{h}^{\top}\nabla^2\Psi(\mathbf{a})\mathbf{h}
		\leq\sum_jq_j(\mathbf{a})(\mathbf{h}^{\top}\mathbf{c}_j)^2
		\leq\sum_jq_j(\mathbf{a})\|\mathbf{h}\|^2\|\mathbf{c}_j\|^2
		=\|\mathbf{h}\|^2,
		\label{eq:hessian}
	\end{equation}
	where the last step follows from the $\ell_2$-normalization of the textual
	prototypes, i.e., $\|\mathbf{c}_j\| = 1$ for all $j$.
	Since $\log q_j(\mathbf{a})=
	\mathbf{a}^{\top}\mathbf{c}_j-\Psi(\mathbf{a})$, the KL divergence equals
	\begin{equation}
		\begin{aligned}
			D_{\mathrm{KL}}\!\left(q(\mathbf{a})\,\middle\|\,q(\mathbf{b})\right)
			&=\sum_jq_j(\mathbf{a})
			\left[(\mathbf{a}-\mathbf{b})^{\top}\mathbf{c}_j
			-\Psi(\mathbf{a})+\Psi(\mathbf{b})\right]\\
			&=\Psi(\mathbf{b})-\Psi(\mathbf{a})
			-\nabla\Psi(\mathbf{a})^{\top}(\mathbf{b}-\mathbf{a}).
		\end{aligned}
	\end{equation}
	Set $\mathbf{h}=\mathbf{b}-\mathbf{a}$. Taylor's integral remainder and
	Eq.~\ref{eq:hessian} give
	\begin{equation}
		\begin{aligned}
			D_{\mathrm{KL}}\!\left(q(\mathbf{a})\,\middle\|\,q(\mathbf{b})\right)
			&=\int_0^1(1-t)\mathbf{h}^{\top}
			\nabla^2\Psi(\mathbf{a}+t\mathbf{h})\mathbf{h}\,dt\\
			&\leq\|\mathbf{h}\|^2\int_0^1(1-t)dt
			=\frac{1}{2}\|\mathbf{h}\|^2.
		\end{aligned}
	\end{equation}
\end{proof}

\begin{proof}[\textbf{Proof of Theorem~\ref{thm1:semantic_discrepancy}}]
	Under Assumption~\ref{ass:regularity}, Lemma~\ref{lem:transfer_to_omitted} and Eq.~\ref{eq:nearest_error} imply
	\begin{equation}
		\|\widehat{\mathbf{t}}_i-\mathbf{c}_{y_i}\|
		\leq\|\widehat{\mathbf{t}}_i-\mathbf{t}_i\|
		+\|\mathbf{t}_i-\mathbf{c}_{y_i}\|
		\leq d_i+\varepsilon_{\mathrm{proj}}+b_i.
	\end{equation}
	Applying Lemma~\ref{lem:kl_bound} to
	$(\mathbf{c}_{y_i},\widehat{\mathbf{t}}_i)$ and averaging yields
	\begin{equation}
		\mathcal{E}^{\mathrm{rec}}_o
		\leq\frac{1}{2N_o}\sum_{i\in\mathcal{I}_o}
		\|\widehat{\mathbf{t}}_i-\mathbf{c}_{y_i}\|^2
		\leq\frac{1}{2N_o}\sum_{i\in\mathcal{I}_o}
		(\varepsilon_{\mathrm{proj}}+b_i+d_i)^2.
		\label{eq:sample_bound}
	\end{equation}
	The Euclidean triangle inequality gives
	\begin{equation}
		\begin{aligned}
			\left[\frac{1}{N_o}\sum_{i\in\mathcal{I}_o}
			(\varepsilon_{\mathrm{proj}}+b_i+d_i)^2\right]^{1/2}
			&\leq\varepsilon_{\mathrm{proj}}
			+\left(\frac{1}{N_o}\sum_{i\in\mathcal{I}_o}b_i^2\right)^{1/2}
			+\left(\frac{1}{N_o}\sum_{i\in\mathcal{I}_o}d_i^2\right)^{1/2}\\
			&=\varepsilon_{\mathrm{proj}}+\mathfrak{b}_o+\mathfrak{r}_o.
		\end{aligned}
		\label{eq:euclidean_triangle_inequality}
	\end{equation}
	Squaring Eq.~\ref{eq:euclidean_triangle_inequality} and substituting into
	Eq.~\ref{eq:sample_bound} establishes
	\begin{equation}
		\mathcal{E}^{\mathrm{rec}}_o
		\leq\frac{1}{2}
		\left(\varepsilon_{\mathrm{proj}}+\mathfrak{b}_o+\mathfrak{r}_o\right)^2.
		\label{eq:final_bound}
	\end{equation}
\end{proof}

\textbf{Analysis of Theorem~\ref{thm1:semantic_discrepancy}.} $\varepsilon_{\mathrm{proj}}$ is determined by the quality of cross-modal
semantic projection on the provided classes. It becomes smaller when the
generated textual representations are better aligned with their
corresponding textual prototypes and the pseudo-labeled groups exhibit
more compact visual structures. In LSR, the feature regression objective
$\mathcal{L}_{\mathrm{feat}}$ in Eq.~\ref{eq:feature_regression_loss} directly encourages such semantic alignment,
while balanced high-confidence sample selection helps construct more
compact provided-class groups.

$\mathfrak{b}_o$ characterizes the structural consistency between
the visual and textual spaces which depends on the dataset. It is small when an omitted-class sample and its textual semantics can be well recomposed from the provided-class
structures using consistent convex weights. Therefore, a smooth
cross-modal mapping that preserves the relative structure between the two
modalities facilitates semantic transfer from provided to omitted classes.

Finally, $\mathfrak{r}_o$ depends on how well the retrieval textual
representations cover the generated textual space. In LSR, over-clustering
provides additional semantic anchors, filtering removes unreliable anchors,
and representative retrieval in Eq.~\ref{eq:retrieval} associates the retained anchors with actual generated textual representations. These operations jointly improve the
coverage and reliability of the retrieval textual representations, thereby
reducing its approximation error.

Overall, accurate projection, lower cross-modal recomposition discrepancy and representation approximation error jointly lead to a tighter bound.

\section{Datasets and Evaluations}
\label{app:datasets_and_evaluation}
We conduct experiments on eight image classification datasets from distinct domains, including DTD (DTD.)~\citep{DTD}, FGVC-Aircraft (FA.)~\citep{FGVC-Aircraft}, CUB (CUB.)~\citep{CUB}, Flowers102 (Flw.)~\citep{Flowers102}, OxfordPets (Pets.)~\citep{OxfordPets}, StanfordCars (Cars.)~\citep{StanfordCars}, RESISC45 (RES.)~\citep{RESISC45}, and CIFAR100 (CIFAR.)~\citep{cifar100}. Dataset statistics are presented in Table \ref{tab:datasets}. All datasets adopt the default training-test splits.

\begin{table}[ht]
	\caption{Dataset statistics.}
	\label{tab:datasets}
	\centering
	\begin{tabular*}{1.0\textwidth}{@{}@{\extracolsep{\fill}}ccccccccc@{}}
		\toprule[1.1pt]
		& DTD. & FA. & CUB. & Flw. & Pets. & Cars. & RES. & CIFAR. \\
		\cmidrule(r){1-9}
		Class number & 47 & 100 & 200 & 102 & 37 & 196 & 45 & 100\\
		Training set size & 3760 & 6667 & 5594 & 2040 & 3680 & 8144 & 6300 & 50000\\
		Testing set size & 1880 & 3333 & 5794 & 6149 & 3669 & 8041 & 25200 & 10000\\
		Balanced & \checkmark & \checkmark & \ding{55} & \ding{55} & \ding{55} & \ding{55} & \checkmark & \checkmark \\
		\bottomrule[1.1pt]
	\end{tabular*}
\end{table}

In our experiments, we evaluate methods with three metrics, including omitted-class accuracy, omitted-group recall, and all-class accuracy. Let $\mathcal{D}_{\mathrm{test}}=\{(x_i,y_i)\}_{i=1}^{N}$ denote the test set, where predictions are made over the set of all class names
$\mathcal{C}_t=\mathcal{C}_p\cup\mathcal{C}_o$.
Let $\hat{y}_i$ be the predicted label of $x_i$, and define
$\mathcal{I}_o=\{i\mid y_i\in\mathcal{C}_o\}$, $N_o=\left| \mathcal{I} _o \right|$.

\textbf{Omitted-class accuracy} (OCA) measures the classification accuracy of omitted-class samples:
\begin{equation}
	\mathrm{OCA}
	=\frac{1}{N_o}\sum_{i\in\mathcal{I}_o}\mathbf{1}\!\left(\hat{y}_i=y_i\right)\times 100\%.
\end{equation}

\textbf{Omitted-group recall} (OGR) treats all omitted classes as a single group and measures the proportion of omitted-class samples that are predicted as any omitted class:
\begin{equation}
	\mathrm{OGR}
	=
	\frac{1}{N_o}
	\sum_{i\in\mathcal{I}_o}
	\mathbf{1}\!\left(\hat{y}_i\in\mathcal{C}_o\right)
	\times 100\%.
\end{equation}
Unlike OCA, OGR does not require the exact omitted-class label to be correctly predicted. Instead, it reflects how well the model avoids misclassifying omitted-class samples as provided classes.

\textbf{All-class accuracy} (ACC) measures the overall classification accuracy across both provided and omitted classes:
\begin{equation}
	\mathrm{ACC}
	=
	\frac{1}{N}
	\sum_{i=1}^{N}
	\mathbf{1}\!\left(\hat{y}_i=y_i\right)
	\times 100\%.
\end{equation}
$\mathbf{1}(\cdot)$ denotes the indicator function, which equals $1$ when the condition is satisfied and $0$ otherwise.

\section{Comparison Methods}
\label{app:comparison_methods}
We compare our method with ten SOTA methods, including CLIP~\citep{CLIP}, Few Pseudo-Labels (FPL)~\citep{FPL_GRIP}, Grow and Refine Iteratively Pseudo-labels (GRIP)~\citep{FPL_GRIP}, Candidate Pseudolabel Learning (CPL)~\citep{CPL}, Universal Entropy Optimization (UEO)~\citep{UEO}, Augmenting Discriminative Richness (AiR)~\citep{AiR}, Concept-Adaptive Pseudo-labeling (CAP)~\citep{CAP}, Maximum Concept Matching (MCM)~\citep{MCM}, M-Tuning~\citep{M-Tuning}, and BGAT~\citep{BGAT}. Among these methods, MCM, M-Tuning, and BGAT were originally designed for open-set scenarios, and we adapt them to the VI-UVA setting. The details of each method are presented as follows:
\begin{itemize}[leftmargin=*]
	\item \textbf{CLIP: } CLIP exhibits zero-shot capability for classification.
	\item \textbf{FPL: } FPL filters the top-$k$ high-confidence pseudo-labeled samples for each class to construct a balanced pseudo-label dataset for model training.
	\item \textbf{GRIP: } GRIP predefines the number of iterations. Within each iteration, it regenerates pseudo-labels and filters the top-$k$ high-confidence pseudo-labeled samples for each class to construct a balanced pseudo-label dataset, where $k$ increases linearly as iterations proceed.
	\item \textbf{CPL: } CPL uses intra- and inter-instance selected candidate pseudo-labels instead of hard pseudo-labels, effectively reducing label noise and class imbalance for better adaptation.
	\item \textbf{UEO: } UEO is a universal entropy optimization method for realistic unsupervised CLIP fine-tuning. It uses sample confidence to minimize entropy for in-distribution samples and maximize marginal entropy for OOD samples.
	\item \textbf{AiR: } AiR utilizes LoRA‑fine‑tuned diffusion models to generate class‑representative synthetic images and builds an auxiliary classifier for high‑quality pseudo‑labels.
	\item \textbf{CAP: } CAP proposes concept alignment to fix semantic-visual misalignment, and a confusion-aware calibrated margin to improve discrimination between similar classes. It also adopts dual independent adapters to separately learn balanced pseudo-labels and high-confidence unlabeled samples.
	\item \textbf{MCM: } MCM is adapted to VI-UVA by computing concept-matching scores over the adaptation vocabulary and using these scores to select samples for prompt tuning.
	\item \textbf{M-Tuning: } M-Tuning is adapted to VI-UVA by augmenting the adaptation vocabulary with auxiliary open words and incorporating them into prompt tuning. These additional words expand the set of textual concepts considered during optimization, helping mitigate the model's bias toward the provided classes.
	\item \textbf{BGAT: } We adapt BGAT to VI-UVA by applying a Box–Cox transformation and a two-component Gaussian mixture model to sample scores computed over the adaptation vocabulary, yielding a data-adaptive threshold. This threshold filters out potential omitted-class samples.
\end{itemize}

\section{Implementation}
\label{app:implementation}

\subsection{Pseudo-Labeled Dataset Generation}
\label{app:pseudo_labeled_dataset_generation}
We first exploit the zero-shot predictions of the pre-trained VLM to assign each $x_i$ in $\mathcal{D}_{\mathrm{U}}$ to the most likely provided class. Let $\hat{y}_i\in\mathcal{C}_p$ denote the predicted class of $x_i$. For each provided class $c\in\mathcal{C}_p$, we define the set of assigned sample indices and its size as
\begin{equation}
	\mathcal{S}_c=\{i\mid\hat{y}_i=c\},~~
	n_c=|\mathcal{S}_c|.
\end{equation}
Then, the provided classes are sorted in ascending order of their sample counts. Let $\pi$ denote the resulting ordering, such that
\begin{equation}
	n_{\pi(1)}\leq n_{\pi(2)}\leq\cdots\leq n_{\pi(|\mathcal{C}_p|)}.
\end{equation}
The first $\lfloor \varepsilon \left| \mathcal{C} _p \right| \rfloor $ classes constitute the tail-class subset
$\mathcal{C}_{\mathrm{tail}}=\{\pi(1),\ldots,\pi(\left|\mathcal{C}_{\mathrm{tail}}\right|)\}$, where $\varepsilon \in \left( 0,1 \right) $ denotes the ratio of tail classes.
To mitigate class imbalance, we use their average sample count to determine a per-class upper bound:
\begin{equation}
	U=\left\lfloor\frac{1}{\left|\mathcal{C}_{\mathrm{tail}}\right|}
	\sum_{j=1}^{\left|\mathcal{C}_{\mathrm{tail}}\right|}n_{\pi(j)}\right\rfloor.
\end{equation}
For each provided class, we compute the confidence score of each $x_i$ in $\mathcal{D}_{\mathrm{U}}$ for that class and select the $U$ highest-confidence images. Each selected image is paired with the corresponding class as its pseudo-label, and these pairs are combined into a balanced pseudo-labeled dataset $\mathcal{D}_{\mathrm{P}}$ for training the cross-modal semantic projector. To construct $\mathcal{D}_{\mathrm{R}}$, we follow the same procedure, using $\mathcal V_R$ in place of $\mathcal C_p$.

\subsection{Filtering Procedure of Unreliable Anchors}
\label{app:filtering_procedure_of_unreliable_centers}
Over-clustering may produce unreliable semantic anchors that capture outliers or noise. We evaluate the reliability of each candidate anchor by the number of unlabeled images assigned to it. Since the semantic anchors lie in the textual embedding space shared with the visual embedding space, we calculate the cosine similarity between each visual embedding $\mathbf{x}_i$ and every semantic anchor $\boldsymbol{\mu}_k$ and assign $\mathbf{x}_i$ to its most similar anchor:
\begin{equation}
	a_i=\underset{k\in \left\{1,...,K\right\}}{\mathrm{arg}\max}\frac{\boldsymbol{\mu }_{k}^{\top}\mathbf{x}_i}{\left\| \boldsymbol{\mu }_k \right\| _2\left\| \mathbf{x}_i \right\| _2}.
\end{equation}
where $K$ is the number of candidate anchors. The support count of the anchor $\boldsymbol{\mu}_k$ is defined as:
\begin{equation}
	S_k=\sum_{i=1}^{N}\mathbf{1}(a_i=k),
\end{equation}
where $\mathbf{1}(\cdot)$ is the indicator function, which equals $1$ when the specified condition is satisfied and $0$ otherwise. $N$ represents the number of images in $\mathcal{D}_\mathrm{U}$. We retain anchors whose support counts exceed the threshold $\lfloor {\delta N/\left| \mathcal{C} _p \right|} \rfloor $, where $\delta$ controls the minimum support count and is fixed at 0.05 in our implementation. This procedure filters out anchors supported by very few images, helping to suppress noise while retaining the dominant semantic structures in the unlabeled data.

\subsection{Implementation Details}
\label{app:implementation_details}
We adopt CLIP with a ViT-B/32 image encoder and set the number of learnable context tokens $S$ to 16. The parameter $\varepsilon$ for constructing balanced pseudo-labeled datasets takes values in  $\left\{0.2, 0.3, 0.4\right\}$. The batch size is set to 32.
We train CMSP for 50 epochs using AdamW with a learning rate of $10^{-4}$ and a weight decay of $10^{-4}$. The length of the pseudo-token sequence $L$ is set to 7, and the loss coefficient $\alpha$ is fixed at 0.1. The parameter $\delta$ for filtering unreliable anchors is fixed at 0.05.
For textual prompt tuning, we optimize the shared context tokens using SGD for a maximum of 50 epochs, with an initial learning rate of 0.01, a cosine annealing schedule, and a weight decay of 0.05. The temperature parameter $\tau$ is fixed at 1.0, and the loss coefficient $\lambda$ takes values in $\left\{0.2, 1.0, 2.0, 5.0\right\}$. Sensitivity analyses of the parameters are presented in Sec~\ref{sec:more_analysis} and Appendix~\ref{app:sensitivity_analysis}.

\section{More Experiment Results and Analysis}
\label{app:more_experiment_results_and_analysis}

\subsection{More Main Results}
\label{app:additional_main_results}
Table~\ref{tab:main_results_gamma_0.4} compares the all-class accuracy of different methods at omission ratio of $\gamma=0.4$. All adaptation methods except LSR perform worse than the CLIP baseline on average, while LSR outperforms the CLIP baseline by an average of \textbf{2.34} percentage points. These results demonstrate the effectiveness of LSR at a larger omission ratio.
 
\begin{table*}[ht]
	\centering
	\caption{Comparison results of all-class accuracy (\%) at $\gamma=0.4$. Values are expressed as $\mathrm{mean}_{\mathrm{std}}$. The highest accuracies are bold.}
	\vspace{-0.2cm}
	\label{tab:main_results_gamma_0.4}
	\small
	\renewcommand{\arraystretch}{1.1}
	\resizebox{1\linewidth}{!}{
		\begin{tabular}{l|ccccccccc}
			\toprule[1.1pt]
			Methods & Flw. & FA. & CUB. & DTD. & Pets. & Cars. & RES. & CIFAR. & Avg. \\
			\cmidrule(lr){1-1} \cmidrule(lr){2-10}
			CLIP & $63.63_{0.00}$ & $17.64_{0.00}$ & $51.77_{0.00}$ & $43.19_{0.00}$ & $84.51_{0.00}$ & $58.93_{0.00}$ & $54.48_{0.00}$ & $64.19_{0.00}$ & $54.79_{0.00}$ \\
			FPL & $62.49_{0.24}$ & $15.61_{0.94}$ & $49.41_{0.38}$ & $43.99_{0.46}$ & $88.01_{1.22}$ & $53.83_{0.47}$ & $58.28_{0.74}$ & $64.93_{0.68}$ & $54.57_{0.64}$ \\
			GRIP & $55.36_{0.53}$ & $12.67_{1.08}$ & $39.51_{0.86}$ & $42.25_{1.58}$ & $75.39_{2.98}$ & $45.02_{0.97}$ & $57.35_{2.30}$ & $61.31_{0.35}$ & $48.61_{1.33}$ \\
			CPL & $54.74_{0.94}$ & $14.20_{0.35}$ & $44.24_{0.97}$ & $41.12_{1.99}$ & $79.54_{1.02}$ & $43.81_{0.46}$ & $55.98_{0.71}$ & $58.54_{0.88}$ & $49.02_{0.92}$ \\
			CAP & $52.78_{3.44}$ & $13.64_{1.67}$ & $44.66_{0.91}$ & $40.66_{1.58}$ & $73.21_{0.19}$ & $50.27_{0.32}$ & $60.94_{0.58}$ & $61.70_{0.04}$ & $49.73_{1.09}$ \\
			AiR & $55.69_{1.88}$ & $12.41_{0.65}$ & $37.94_{0.12}$ & $42.06_{1.18}$ & $76.01_{2.08}$ & $45.14_{0.50}$ & $56.09_{0.76}$ & $61.09_{0.50}$ & $48.30_{0.96}$ \\
			MCM & $59.32_{0.37}$ & $14.29_{2.09}$ & $44.76_{0.73}$ & $39.49_{1.22}$ & $80.20_{2.98}$ & $50.11_{1.21}$ & $49.73_{0.35}$ & $63.55_{0.38}$ & $50.18_{1.17}$ \\
			UEO & $58.35_{1.48}$ & $8.22_{1.89}$ & $42.57_{0.77}$ & $40.43_{0.87}$ & $80.94_{1.03}$ & $49.04_{2.02}$ & $57.38_{0.59}$ & $45.56_{2.58}$ & $47.81_{1.40}$ \\
			M-Tuning & $61.55_{0.28}$ & $15.98_{0.96}$ & $49.36_{0.33}$ & $43.01_{0.27}$ & $\textbf{88.01}_{0.31}$ & $53.39_{0.77}$ & $58.31_{1.60}$ & $64.48_{0.40}$ & $54.26_{0.62}$ \\
			BGAT & $58.06_{0.96}$ & $16.69_{0.27}$ & $46.57_{0.66}$ & $39.18_{0.76}$ & $82.80_{0.50}$ & $55.15_{0.72}$ & $51.95_{0.20}$ & $64.54_{0.13}$ & $51.87_{0.53}$ \\
			\cellcolor{Gray}LSR (Ours) & \cellcolor{Gray}$\textbf{64.36}_{0.73}$ & \cellcolor{Gray}$\textbf{18.55}_{0.05}$ & \cellcolor{Gray}$\textbf{52.43}_{0.18}$ &
			\cellcolor{Gray}$\textbf{45.59}_{0.52}$ & \cellcolor{Gray}${86.33}_{0.55}$ & \cellcolor{Gray}$\textbf{59.06}_{0.48}$ &
			\cellcolor{Gray}$\textbf{64.52}_{0.39}$ & \cellcolor{Gray}$\textbf{66.18}_{0.25}$ & \cellcolor{Gray}$\textbf{57.13}_{0.39}$ \\
			\bottomrule[1.1pt]
	\end{tabular}}
\end{table*}

\subsection{More Ablation Studies}
\label{app:more_ablation_studies}
As shown in Table \ref{tab:more_ablation_study}, incorporating $\mathcal{L}_{\mathrm{R}}$ improves omitted-class accuracy across all six datasets, with an average gain of \textbf{3.39} percentage points. By introducing supervision from the reconstructed latent label space, $\mathcal{L}_{\mathrm{R}}$ encourages prompts to retain semantic distinctions beyond the adaptation vocabulary.
\begin{table*}[ht] 
	\centering 
	\caption{Comparison of omitted-class accuracy (\%) for ablation variants w and w/o $\mathcal{L}_\mathrm{R}$ at $\gamma=0.4$. Values are expressed as $\mathrm{mean}_{\mathrm{std}}$. The highest accuracies are bold.} 
	\vspace{-0.2cm}
	\label{tab:more_ablation_study} 
	\small
	\renewcommand{\arraystretch}{1.1} 
	\resizebox{0.8\linewidth}{!}{ 
		\begin{tabular}{c|ccccccc} 
			\toprule[1.1pt] 
			$\mathcal{L}_\mathrm{R}$ & Flw. & FA. & CUB. & Pets. & Cars. & RES. & Avg. \\  
			\midrule 
			\textbf{\ding{55}} & $66.92_{1.14}$ & $18.30_{1.75}$ & $39.18_{2.11}$ & $85.10_{1.31}$ & $57.81_{0.49}$ & $57.96_{0.00}$ & $54.21_{1.25}$  \\  
			\textbf{\checkmark} & $\textbf{72.00}_{0.69}$ & $\textbf{20.33}_{0.12}$ & $\textbf{44.16}_{0.24}$ & $\textbf{89.31}_{0.47}$ & $\textbf{59.54}_{0.30}$ & $\textbf{60.29}_{0.41}$ & $\textbf{57.60}_{0.37}$ \\ 
			\bottomrule[1.1pt] 
	\end{tabular}} 
\end{table*}

\subsection{Comparison with WordNet-Based Vocabulary Expansion}
\label{app:wordnets}
To investigate an alternative way to mitigate vocabulary overfitting, we expand the adaptation vocabulary with additional words retrieved from WordNet. For each unlabeled image, we seek words that better match its visual embedding than the class names in the adaptation vocabulary. These words supplement the adaptation vocabulary to provide supervision beyond the provided classes.

Table~\ref{tab:WordNets} shows that LSR consistently outperforms this strategy across all eight datasets and three omission ratios. At $\gamma=0.2$, $0.3$, and $0.4$, LSR yields gains of \textbf{4.08}, \textbf{4.53}, and \textbf{4.68} percentage points in all-class accuracy.
These results suggest that retrieving semantically similar words alone is insufficient to construct effective supervision for adaptation. Although additional words may better match individual images, higher image-text similarity does not necessarily imply that they capture the distinctions among classes. Retrieved words may describe shared attributes or broad concepts, providing limited class-discriminative supervision. In contrast, LSR constructs semantic anchors by clustering representations from the unlabeled data, allowing the reconstructed label space to reflect its latent semantic structures.

\begin{table*}[ht] 
	\centering 
	\caption{Comparison of all-class accuracy (\%) between WordNet and LSR at different $\gamma$.Values are expressed as $\mathrm{mean}_{\mathrm{std}}$. The highest accuracies are bold.} 
	\vspace{-0.2cm}
	\label{tab:WordNets} 
	\small
	\renewcommand{\arraystretch}{1.1} 
	\resizebox{1.0\linewidth}{!}{ 
		\begin{tabular}{cc|ccccccccc} 
			\toprule[1.1pt] 
			$\gamma$ & Methods & Flw. & FA. & CUB. & DTD. & Pets. & Cars. & RES. & CIFAR. & Avg. \\  
			\midrule 
			\multirow{2}{*}{0.2} & WordNet & $62.77_{0.42}$ & $16.67_{0.23}$ & $51.04_{0.73}$ & $42.07_{0.54}$ & $85.79_{2.07}$ & $57.22_{0.53}$ & $57.12_{1.13}$ & $62.86_{0.45}$ & $54.44_{0.76}$  \\  
			& LSR (Ours) & \cellcolor{Gray}$\textbf{66.89}_{0.60}$ & \cellcolor{Gray}$\textbf{19.34}_{0.41}$ & \cellcolor{Gray}$\textbf{53.27}_{0.22}$ &
			\cellcolor{Gray}$\textbf{47.02}_{0.54}$ & \cellcolor{Gray}$\textbf{88.49}_{0.60}$ & \cellcolor{Gray}$\textbf{60.15}_{0.34}$ &
			\cellcolor{Gray}$\textbf{65.82}_{0.35}$ & \cellcolor{Gray}$\textbf{67.14}_{0.53}$ & \cellcolor{Gray}$\textbf{58.52}_{0.45}$ \\
			\midrule 
			\midrule 
			\multirow{2}{*}{0.3} & WordNet & $55.34_{0.76}$ & $16.61_{0.20}$ & $50.56_{0.24}$ & $40.34_{0.87}$ & $86.85_{0.69}$ & $55.94_{0.73}$ & $56.79_{0.73}$ & $63.08_{0.36}$ & $53.19_{0.57}$  \\  
			& LSR (Ours) & \cellcolor{Gray}$\textbf{65.50}_{0.42}$ & \cellcolor{Gray}$\textbf{18.92}_{0.18}$ & \cellcolor{Gray}$\textbf{53.00}_{0.50}$ &
			\cellcolor{Gray}$\textbf{45.32}_{0.52}$ & \cellcolor{Gray}$\textbf{87.53}_{0.33}$ & \cellcolor{Gray}$\textbf{59.68}_{0.29}$ &
			\cellcolor{Gray}$\textbf{65.26}_{0.41}$ & \cellcolor{Gray}$\textbf{66.53}_{0.14}$ & \cellcolor{Gray}$\textbf{57.72}_{0.35}$ \\ 
			\midrule 
			\midrule 
			\multirow{2}{*}{0.4} & WordNet & $56.64_{0.39}$ & $15.31_{0.45}$ & $49.18_{1.53}$ & $41.01_{0.28}$ & $84.11_{1.23}$ & $54.29_{0.78}$ & $56.31_{0.32}$ & $62.71_{1.36}$ & $52.45_{0.79}$  \\ 
			& LSR (Ours) & \cellcolor{Gray}$\textbf{64.36}_{0.73}$ & \cellcolor{Gray}$\textbf{18.55}_{0.05}$ & \cellcolor{Gray}$\textbf{52.43}_{0.18}$ &
			\cellcolor{Gray}$\textbf{45.59}_{0.52}$ & \cellcolor{Gray}$\textbf{86.33}_{0.55}$ & \cellcolor{Gray}$\textbf{59.06}_{0.48}$ &
			\cellcolor{Gray}$\textbf{64.52}_{0.39}$ & \cellcolor{Gray}$\textbf{66.18}_{0.25}$ & \cellcolor{Gray}$\textbf{57.13}_{0.39}$ \\
			\bottomrule[1.1pt] 
		\end{tabular}} 
\end{table*}

\subsection{Sensitivity Analysis}
\label{app:sensitivity_analysis}
We extend the sensitivity analysis of $\lambda$ and $\varepsilon$ to additional datasets. As shown in Fig.~\ref{fig:more_sensitivity_analysis}, LSR maintains relatively stable all-class accuracy as either parameter varies within the tested range. These results further support the robustness of LSR to the choice of these two hyperparameters.

\begin{figure*}[ht]
	\centering
	\begin{subfigure}[t]{0.24\textwidth}
		\centering
		\includegraphics[width=\linewidth]{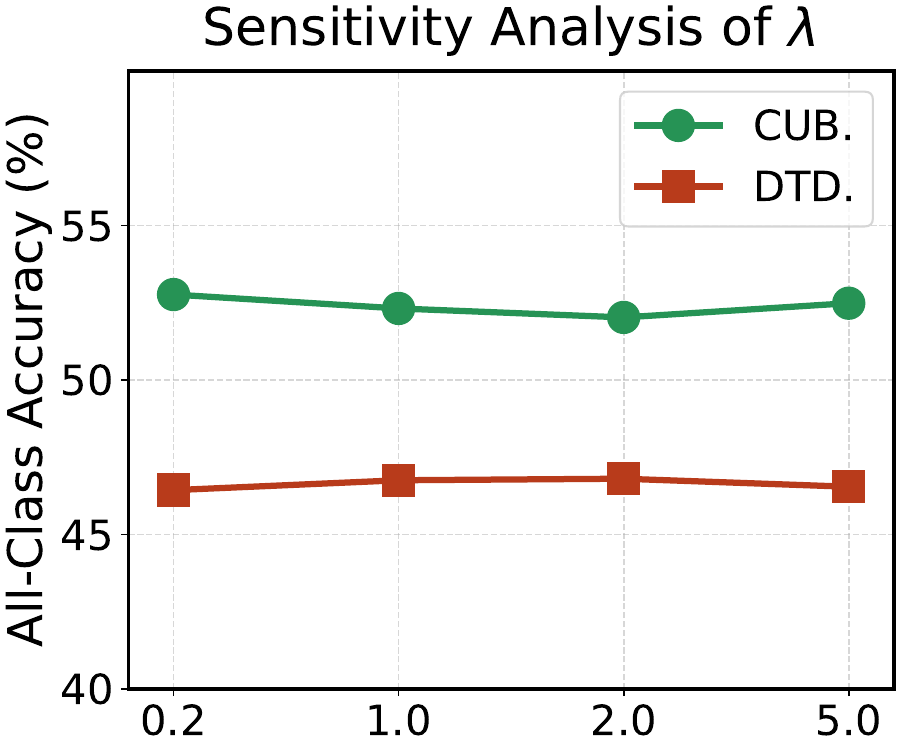}
		\caption{}
		\label{fig:sensitivity_analysis_lambda_cub_dtd}
	\end{subfigure}
	\hfill
	\begin{subfigure}[t]{0.24\textwidth}
		\centering
		\includegraphics[width=\linewidth]{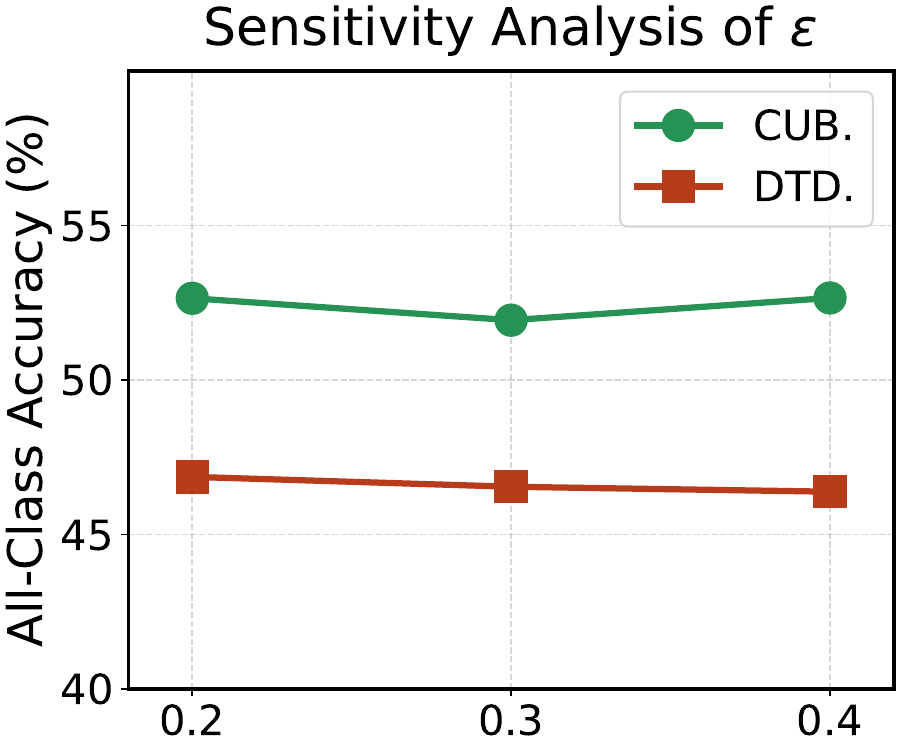}
		\caption{}
		\label{fig:sensitivity_analysis_epsilon_cub_dtd}
	\end{subfigure}
	\hfill
	\begin{subfigure}[t]{0.24\textwidth}
		\centering
		\includegraphics[width=\linewidth]{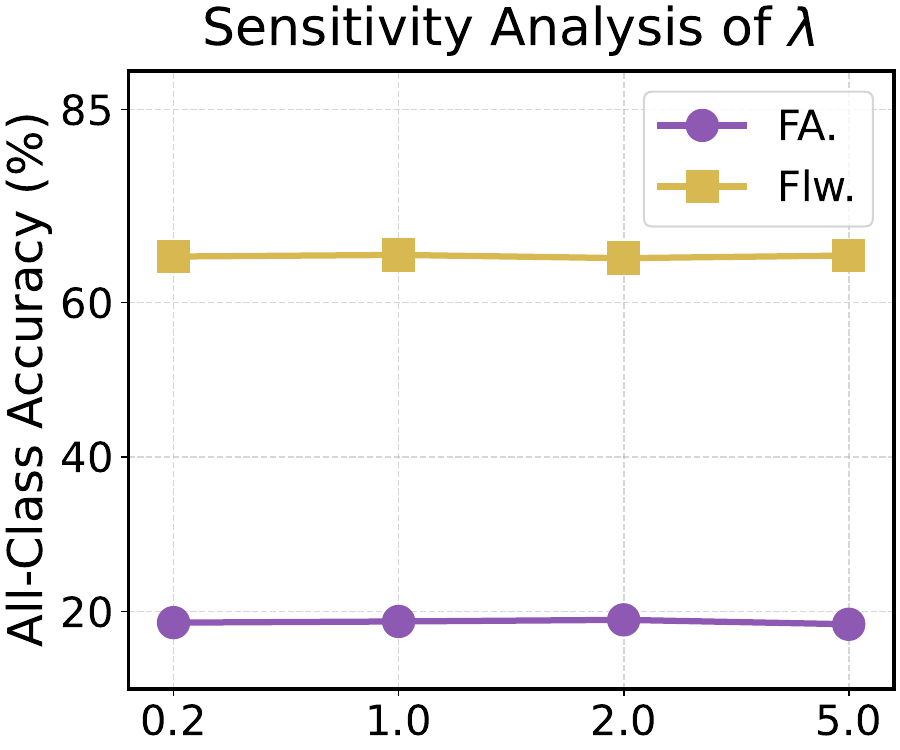}
		\caption{}
		\label{fig:sensitivity_analysis_lambda_fa_flw}
	\end{subfigure}
	\hfill
	\begin{subfigure}[t]{0.24\textwidth}
		\centering
		\includegraphics[width=\linewidth]{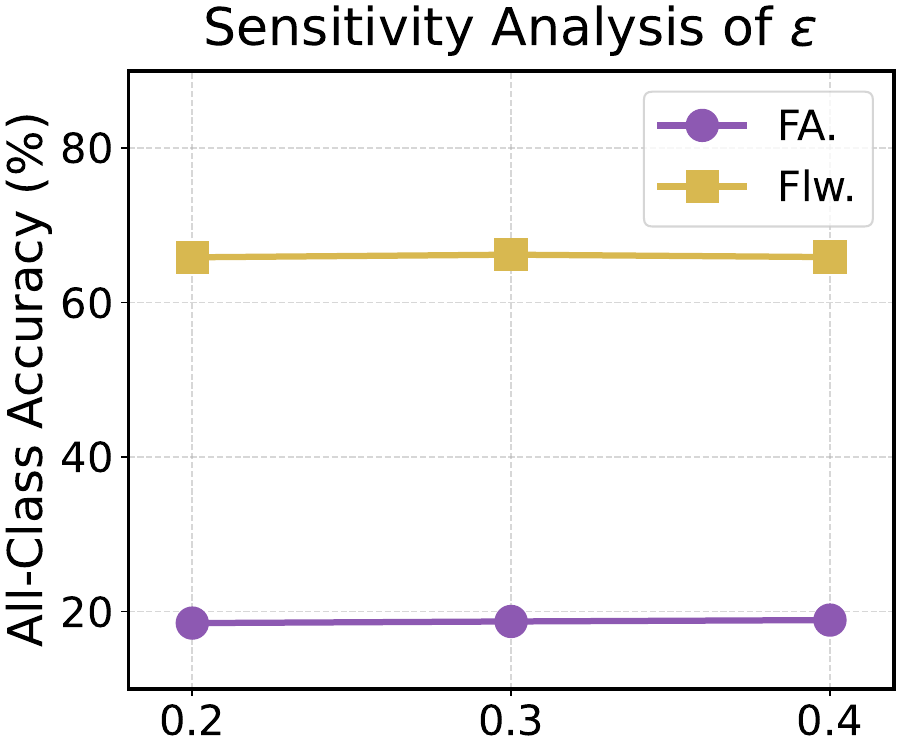}
		\caption{}
		\label{fig:sensitivity_analysis_epsilon_fa_flw}
	\end{subfigure}
	\hfill
	\caption{(a) Sensitivity analysis of $\lambda$ on dataset CUB. and DTD. (b) Sensitivity analysis of $\varepsilon$ on dataset CUB. and DTD. (c) Sensitivity analysis of $\lambda$ on dataset FA. and Flw. (d) Sensitivity analysis of $\varepsilon$ on dataset FA. and Flw.}
	\label{fig:more_sensitivity_analysis}
\end{figure*}

We also perform sensitivity analysis of $\alpha$ (in Eq.~\ref{eq:CMSP_loss}) on different datasets. As shown in Fig.~\ref{fig:more_sensitivity_analysis_alpha}, LSR maintains relatively stable as $\alpha$ varies within the tested range. These results support the robustness of LSR to the choice of $\alpha$. Fixing $\alpha$ at 0.1 is a simple and effective strategy.

\begin{figure*}[ht]
	\centering
	\begin{subfigure}[t]{0.3\textwidth}
		\centering
		\includegraphics[width=\linewidth]{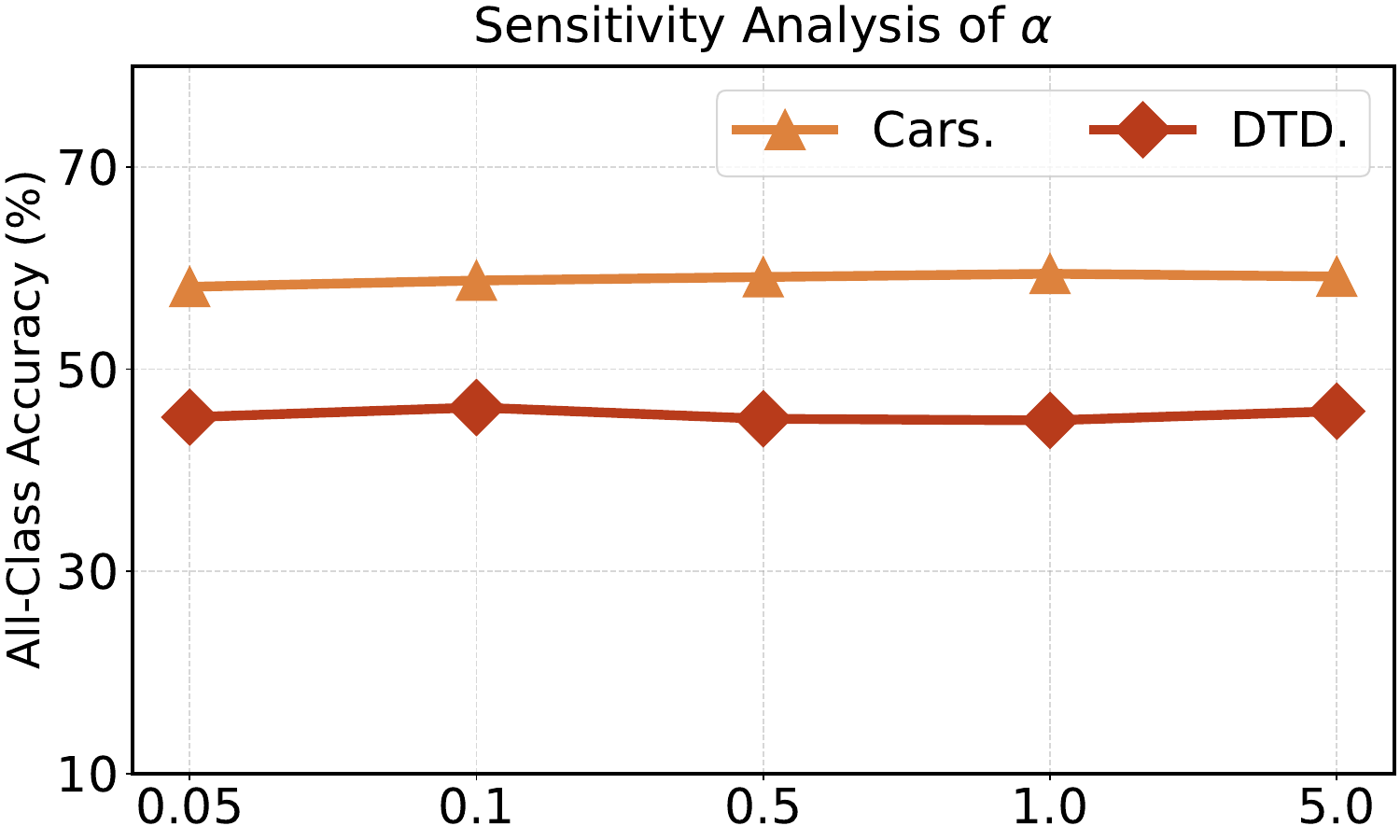}
		\vspace{-0.5cm}
		\caption{}
		\label{fig:sensitivity_analysis_alpha_Cars_DTD}
	\end{subfigure}
	\hspace{0.08\textwidth}%
	\begin{subfigure}[t]{0.3\textwidth}
		\centering
		\includegraphics[width=\linewidth]{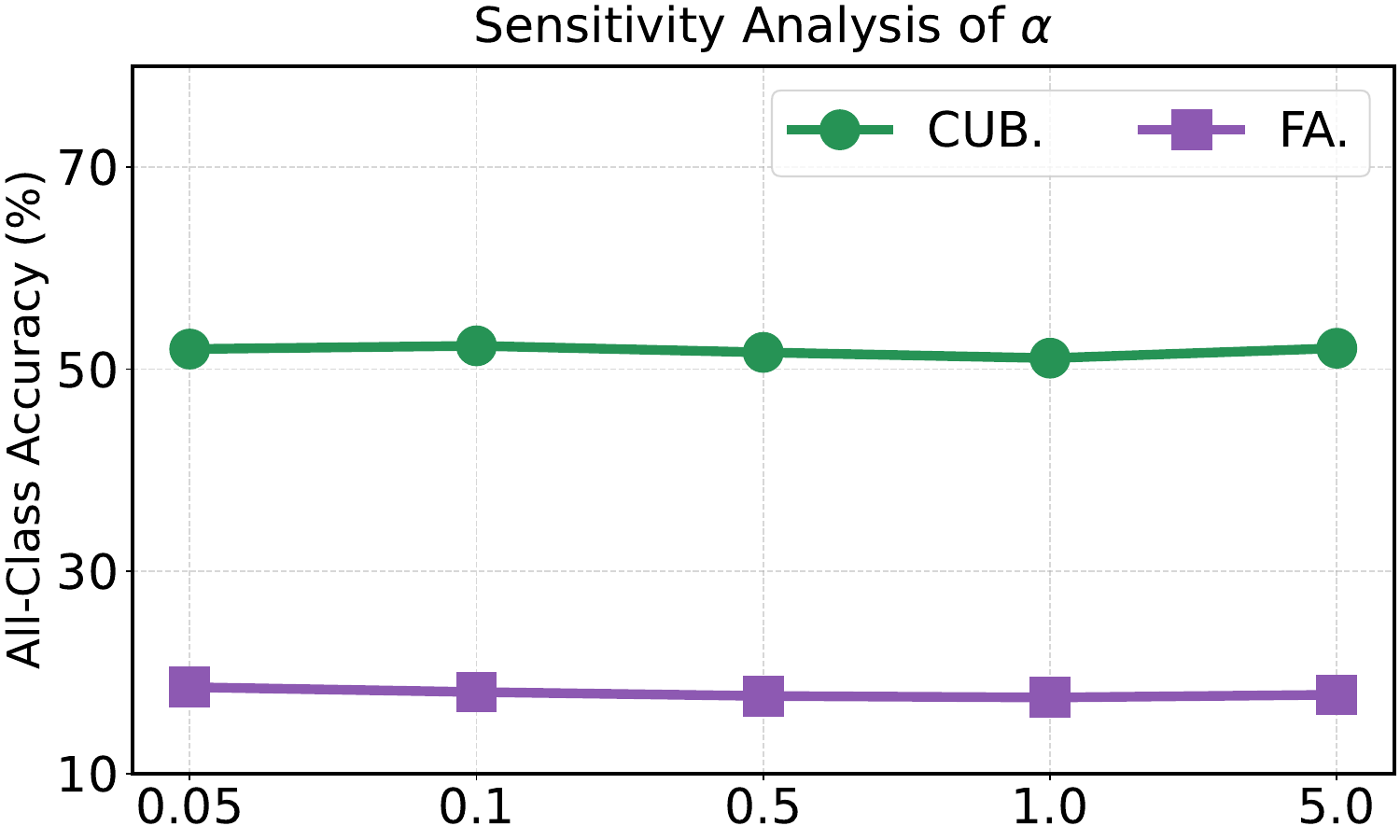}
		\vspace{-0.5cm}
		\caption{}
		\label{fig:sensitivity_analysis_epsilon_CUB_FA}
	\end{subfigure}
	\vspace{-0.1cm}
	\caption{(a) Sensitivity analysis of $\alpha$ on datasets Cars. and DTD. (b) Sensitivity analysis of $\alpha$ on datasets CUB. and FA.}
	\label{fig:more_sensitivity_analysis_alpha}
\end{figure*}

The cluster ratio denotes the ratio of the number of cluster centers (in Sec.~\ref{sec:LLSR}) to the number of provided classes and is set to 2.0 by default. Table~\ref{tab:sensitivity_analysis_cluster} suggests that low cluster ratios generally yield poorer performance. The limited number of centers may cause omitted-class samples to share semantic anchors with provided-class samples, obscuring their semantic distinctions. Over-clustering provides additional centers, which can help separate these samples and capture semantic structures beyond the adaptation vocabulary. However, further increases in the number of centers do not consistently improve performance.

Table~13 examines the sensitivity of LSR to the anchor-filtering parameter $\delta$ (in Sec.~\ref{app:filtering_procedure_of_unreliable_centers}). Setting $\delta=0.0$ disables filtering and yields slightly lower accuracy, suggesting that retaining unreliable anchors can weaken reconstruction-based supervision. Conversely, increasing $\delta$ to 0.5 also reduces performance, as overly aggressive filtering may remove useful anchors. Therefore, fixing $\delta$ to a small value of 0.05 is sufficient to achieve competitive performance.

\begin{figure}[ht]
	\begin{minipage}[ht]{0.53\textwidth}
		\centering
		\captionof{table}{Sensitivity to the cluster ratios.}
		\label{tab:sensitivity_analysis_cluster}
		\renewcommand{\arraystretch}{1.0}
		\resizebox{1.0\linewidth}{!}{ 
			\begin{tabular}{c|ccccc}
				\toprule[1.1pt]
				Dataset & 0.1 & 0.2 & 1.0 & 2.0 & 3.0 \\
				\midrule
				DTD. & 44.09 & 45.37 & 45.31 & 46.86 & 45.53 \\
				Flw. & 63.21 & 63.66 & 63.35 & 64.53 & 65.19 \\
				Cars. & 57.76 & 58.58 & 58.50 & 59.24 & 57.97 \\
				RES. & 62.02 & 63.24 & 63.55 & 64.52 & 63.77 \\
				\bottomrule[1.1pt]
			\end{tabular}}
	\end{minipage}
	\hfill
	\begin{minipage}[ht]{0.45\textwidth}
	\centering
	\captionof{table}{Sensitivity analysis of $\delta$.}
	\label{tab:sensitivity_analysis_retain}
	\renewcommand{\arraystretch}{1.0}
	\resizebox{1.0\linewidth}{!}{ 
		\begin{tabular}{c|cccc}
			\toprule[1.1pt]
			Dataset & 0.0 & 0.05 & 0.1 & 0.5 \\
			\midrule
			DTD. & 45.05 & 45.59 & 45.53 & 45.31 \\
			CUB. & 64.12 & 64.53 & 64.92 & 64.30 \\
			Cars. & 63.45 & 64.52 & 64.25 & 63.51 \\
		    FA. & 17.88 & 18.55 & 18.39 & 16.23 \\
			\bottomrule[1.1pt]
		\end{tabular}}
\end{minipage}
\end{figure}

\section{Compute Resources}
\label{app:compute_resources}
\begin{itemize}[leftmargin=*]
	\item CPU: AMD EPYC 7642 48-Core Processor × 2
	
	\item GPU: NVIDIA GeForce RTX 4090 24G × 1
	
	\item MEM: 500G
	
	\item Maximum total computing time per dataset: training + testing $\approx$ 30min.
\end{itemize}

\end{document}